\newcolumntype{C}[1]{>{\centering\arraybackslash$}m{#1}<{$}}
\theoremstyle{plain}
\newtheorem{theorem}{Theorem}[section]
\newtheorem{lemma}[theorem]{Lemma}
\theoremstyle{definition}
\theoremstyle{remark}
\begin{document}

\twocolumn[
\icmltitlerunning{SDP-CROWN: Efficient Bound Propagation for Neural Network Verification with Tightness of Semidefinite Programming}
\icmltitle{SDP-CROWN: Efficient Bound Propagation for Neural Network Verification with Tightness of Semidefinite Programming}



\icmlsetsymbol{equal}{*}

\begin{icmlauthorlist}
\icmlauthor{Hong-Ming Chiu}{yyy}
\icmlauthor{Hao Chen}{yyy}
\icmlauthor{Huan Zhang}{yyy}
\icmlauthor{Richard Y. Zhang}{yyy}
\end{icmlauthorlist}

\icmlaffiliation{yyy}{Department of Electrical and Computer Engineering, University of Illinois at Urbana-Champaign}

\icmlcorrespondingauthor{Hong-Ming Chiu}{hmchiu2@illinois.edu}
\icmlcorrespondingauthor{Hao Chen}{haoc8@illinois.edu}
\icmlcorrespondingauthor{Huan Zhang}{huan@huan-zhang.com}
\icmlcorrespondingauthor{Richard Y. Zhang}{ryz@illinois.edu}

\icmlkeywords{Machine Learning, Neural Network Verification, ICML}

\vskip 0.3in
]



\printAffiliationsAndNotice{}  

\global\long\def\R{\mathbb{R}}%
\global\long\def\S{\mathbb{S}}%
\global\long\def\AA{\mathcal{A}}%
\global\long\def\XX{\mathcal{X}}%
\global\long\def\BB{\mathcal{B}}%
\global\long\def\EE{\mathcal{E}}%
\global\long\def\LL{\mathcal{L}}%
\global\long\def\L{\mathscr{L}}%
\global\long\def\one{\mathbf{1}}%
\global\long\def\gap{\mathrm{gap}}%
\global\long\def\feas{\mathrm{feas}}%
\global\long\def\inner#1#2{\left\langle #1,#2\right\rangle }%

\global\long\def\ub{\mathrm{ub}}%
\global\long\def\lb{\mathrm{lb}}%

\global\long\def\1{\mathbf{1}}%
\global\long\def\x{\mathbf{x}}%
\global\long\def\e{\mathbf{e}}%
\global\long\def\u{\mathbf{u}}%
\global\long\def\v{\mathbf{v}}%
\global\long\def\y{\mathbf{y}}%
\global\long\def\z{\mathbf{z}}%
\global\long\def\f{\mathbf{f}}%
\global\long\def\F{\mathcal{F}}%
\global\long\def\bvec{\mathbf{h}}%
\global\long\def\Wmat{\mathbf{W}}%
\global\long\def\X{\mathbf{X}}%
\global\long\def\U{\mathbf{U}}%
\global\long\def\G{\mathbf{G}}%
\global\long\def\b#1{\mathbf{#1}}%
\global\long\def\relu{\mathrm{ReLU}}%
\global\long\def\diag{\mathrm{diag}}%
\global\long\def\Diag{\mathrm{Diag}}%
\global\long\def\sumsm{{\textstyle \sum}}%
\global\long\def\I{\mathcal{I}}%
\global\long\def\E{\mathcal{E}}%

\global\long\def\rank{\operatorname{rank}}%

\global\long\def\dist{\operatorname{dist}}%

\global\long\def\diag{\operatorname{diag}}%

\global\long\def\tr{\operatorname{tr}}%

\global\long\def\relu{\operatorname{ReLU}}%


\global\long\def\forall{\mbox{for all }}%


\begin{abstract}
Neural network verifiers based on linear bound propagation scale impressively
to massive models but can be surprisingly loose when neuron coupling
is crucial. Conversely, semidefinite programming (SDP) verifiers capture
inter-neuron coupling naturally, but their cubic complexity restricts them to only small models. In this paper, we propose SDP-CROWN,
a novel hybrid verification framework that combines the tightness
of SDP relaxations with the scalability of bound-propagation verifiers.
At the core of SDP-CROWN is a new linear bound---derived via SDP
principles---that explicitly captures $\ell_{2}$-norm-based inter-neuron
coupling while adding only one extra parameter per layer. This bound
can be integrated seamlessly into any linear bound-propagation pipeline,
preserving the inherent scalability of such methods yet significantly
improving tightness. In theory, we prove that our inter-neuron bound
can be up to a factor of $\sqrt{n}$ tighter than traditional per-neuron
bounds. In practice, when incorporated into the state-of-the-art $\alpha$-CROWN
verifier, we observe markedly improved verification performance on
large models with up to 65 thousand neurons and 2.47 million parameters,
achieving tightness that approaches that of costly SDP-based methods.
\end{abstract}

\section{Introduction}

\begin{figure}[!h]
    \centering
    \includegraphics[width=1.0\linewidth]{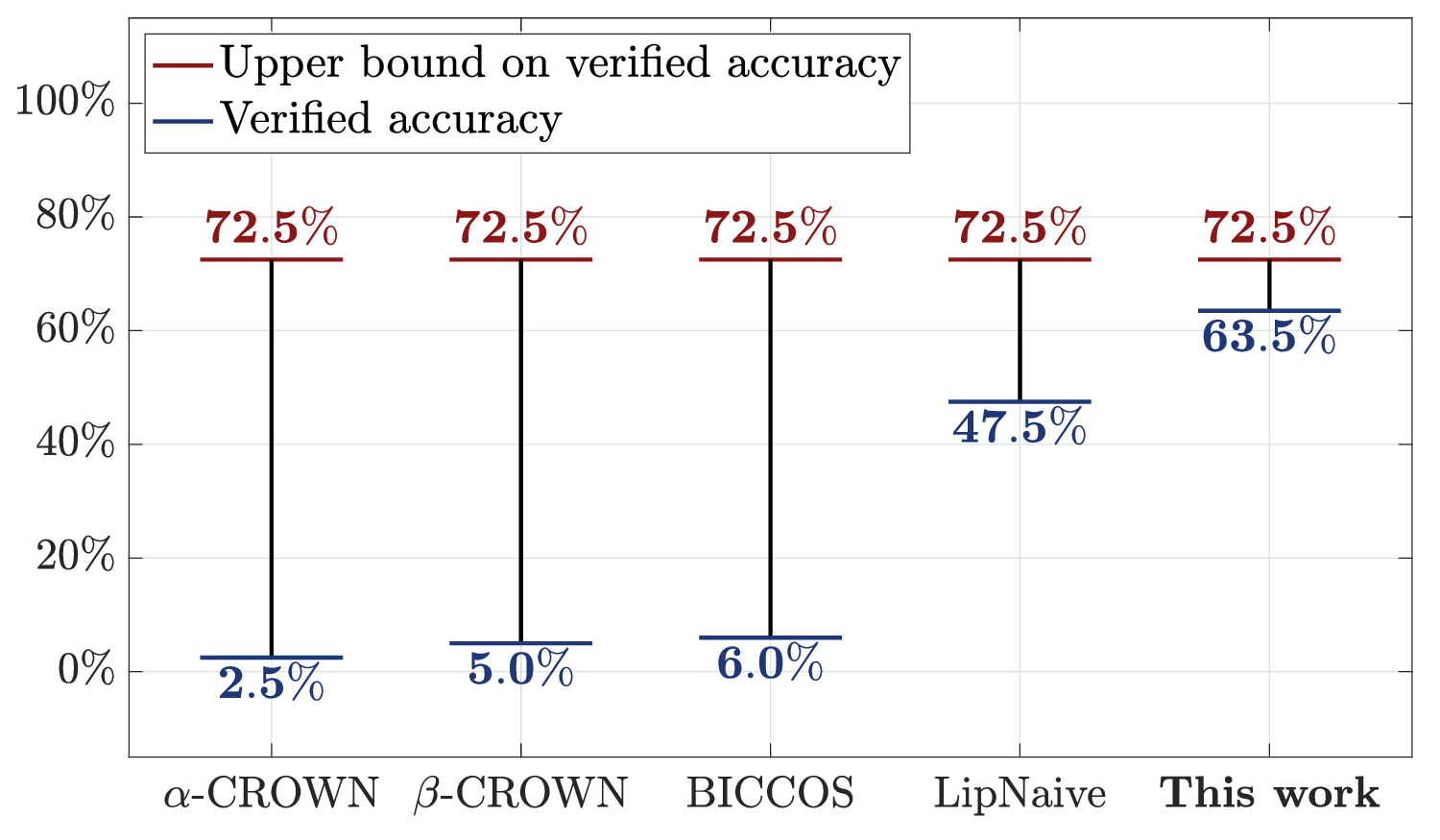}
    \caption{\textbf{Verification of the ConvLarge network on the CIFAR-10 dataset (six convolutional layers + three fully connected layers, $\approx$2.47M parameters
and 65k neurons) under $\ell_{2}$ adversaries.} State-of-the-art
bound propagation algorithms $\alpha$-CROWN, $\beta$-CROWN and BICCOS 
yield surprisingly loose relaxations  under $\ell_2$ adversaries (verified accuracy 2.5\%, 5.0\% and
6.0\%, respectively) at high cost (up to 289s per example). A naive
Lipschitz baseline, which multiples per-layer $\ell_2$ Lipschitz constants directly (``LipNaive''), outperforms them (47.5\% verified
accuracy) with negligible runtime. In contrast, our proposed method 
achieves a striking 63.5\% verified accuracy while keeping runtime
moderate (73s). See Section~\ref{sec:verified accuracy} for experimental details; note that the model is too large to be verified with traditional SDP methods.}\label{fig:verified_accuracy}
\vspace{-0.5em}
\end{figure}

Neural network verification is critical for ensuring that models deployed
in safety-critical applications adhere to robustness and safety requirements.
Among various verification methods, linear bound propagation based approaches \cite{zhang2018crown,wang2018efficient,wong2018provable,dvijotham2018dual,singh2018fast,singh2019abstract,xu2020automatic} have emerged
as the dominant approach due to their effectiveness and scalability.
The core idea is to construct linear functions that provide pointwise
upper and lower bounds for each nonlinear activation function and
recursively propagate these bounds through the network. This method
has proven particularly effective in certifying robustness against
$\ell_{\infty}$-norm perturbations, where individual input features
are perturbed within fixed limits. Notably, many highly ranked verifiers in VNN-COMP verification competition rely on bound propagation due
to its success in scaling to large networks \cite{brix2023first,brix2024fifth}.  

Despite this success, bound propagation performs surprisingly poorly
under $\ell_{2}$-norm perturbations, as shown in Figure~\ref{fig:verified_accuracy}.
Unlike $\ell_{\infty}$-norm perturbations, which treat each neuron
independently, $\ell_{2}$-norm perturbations impose inter-neuron coupling. This coupling introduces dependencies
between neurons that bound propagation, designed to handle features
individually, cannot effectively capture. As a result, it often produces
loose and overly conservative output bounds. The $\ell_{2}$-norm
setting is critical not only as a benchmark for evaluating neuron
coupling \cite{Szegedy2014intriguing} but also for verifying real-world adversarial examples, such
as semantic perturbations, which are commonly modeled using $\ell_{2}$-norm
perturbations applied through generative layers \cite{wong2021learning,barrett2022certifiably}.

To overcome this limitation, semidefinite programming (SDP) methods \cite{raghunathan2018semidefinite,dathathri2020enabling,fazlyab2020safety,anderson2021partition,newton2021exploiting,chiu2023tight}
have been developed to explicitly model inter-neuron dependencies.
These methods optimize over a dense $n\times n$ coupling matrix,
yielding significantly tighter bounds compared to bound propagation.
However, their cubic $O(n^{3})$ time complexity restricts their application
to relatively small models, and renders them impractical for realistic
neural networks.

To bridge the gap between scalability and tightness, we introduce
SDP-CROWN, a hybrid verification framework that combines the tightness
of SDP with the efficiency of bound propagation. At the core of our
framework is a linear bound derived through SDP principles that efficiently
incorporates $\ell_{2}$-norm-based inter-neuron dependencies. A key
feature of our bound is that it introduces only a single new parameter
per layer, in contrast to traditional SDP methods, which require $n^{2}$
parameters per layer of $n$ neurons. As a result, our bound preserves
the scalability of existing bound propagation methods. It can be seamlessly
integrated into existing linear bound propagation verifiers such as CROWN and $\alpha$-CROWN, hence significantly
tightening it for $\ell_{2}$ perturbations.

In theory, we prove that our proposed inter-neuron bound can be up
to $\sqrt{n}$ times tighter than the per-neuron bounds commonly used
in scalable verifiers. In practice, when incorporated into the
$\alpha$-CROWN verifier, we find that SDP-CROWN significantly improves
verification performance, achieving bounds close to those of expensive
SDP-based methods while scaling to models containing over 65 thousand
neurons and 2.47 million parameters. Our extensive experiments demonstrate
that SDP-CROWN consistently enhances $\ell_2$ robustness certification rates
across various architectures without compromising computational efficiency,
making it well-suited for large-scale models where traditional SDP
methods fail to scale.

\subsection{Related work}
To the best of our knowledge, this work is the first to apply SDP relaxations to efficiently tighten the linear bound propagation under $\ell_2$-norm perturbations.

SDP relaxation is the preferred approach for neural network verification against $\ell_2$-norm perturbations. Due to its ability to capture second-order information, SDP-based methods provide tight verification under $\ell_2$ adversaries \citep{chiu2023tight}. Several extensions have been proposed to further tighten the SDP relaxation by introducing linear cuts \cite{batten2021efficient} and nonconvex cuts \cite{ma2020strengthened}, and to accommodate general activation functions \cite{fazlyab2020safety}. However, SDP relaxation does not scale to medium-to-large scale models. Even with state-of-the-art SDP solvers and hardware acceleration \cite{dathathri2020enabling,chiu2023tight}, they remain computationally prohibitive for models containing more than 10 thousand neurons.

In addition to SDP relaxation, bound propagation methods \cite{zhang2018crown,singh2018fast,singh2019abstract,wang2018efficient,dvijotham2018dual,hashemi2021osip,xu2020automatic,xu2021fast} for certifying $\ell_2$ adversaries can be tightened using a branch-and-bound procedure \cite{wang2021beta,de2021scaling,ferrari2022complete,shi2024genbab} by splitting unstable ReLU neurons into two subdomains, or by introducing nonlinear cutting planes \cite{zhang2022general,zhou2024scalable} to capture the shape of $\ell_2$ adversaries. These methods have proven effective when the $\ell_2$-norm perturbation is small, as there are relatively fewer unstable neurons. However, they can become ineffective for larger perturbations, and can completely fail, as demonstrated in Figure~\ref{fig:verified_accuracy}.

Alternatively, $\ell_2$ adversaries can be certified by lower bounding the robustness margin (\ref{eq:attack}) using the network's global Lipschitz constant. To estimate this constant, \cite{fazlyab2019efficient} solve the Lipschitz constant estimation problem using SDP relaxations;  \cite{huang2021training,leino21gloro,hu2023unlocking} incorporate a Lipschitz upper bound during training; and \cite{li2019preventing,trockman2021orthogonalizing,singla2022improved,meunier2022dynamical,xu2022lot,araujo2023unified} design neural network architectures that are provably 1-Lipschitz. While these methods perform well on networks with a small global Lipschitz constant, verifying robustness based solely on the Lipschitz constant can still be overly conservative, as illustrated in Figure~\ref{fig:verified_accuracy}.

\subsection{Notations}
We use the subscript $x_i$ to denote indexing. We use $\1$ to denote a column of ones. We use $\odot$ to denote elementwise multiplication. We use $e_i$ to denote $i$-th  standard basis vector. We use $X\succeq 0$ to denote $X$ being positive semidefinite. We use $|\cdot|$ to denote the \emph{elementwise} absolute value, and $\|\cdot\|_p$ to denote the vector $\ell_p$ norm.

\section{Preliminaries}

\subsection{Problem description}
Consider the task of classifying a data point $x\in\R^{n}$ as belonging to
the $i$-th of $q$ classes using a $N$-layer feedforward neural
network $f:\R^{n}\to\R^{q}$. The network aims to generate a prediction
vector that takes on its maximum value at the $i$-th element, i.e.,
$e_{i}^{T}f(x)>e_{j}^{T}f(x)$ for all incorrect labels $j\ne i$.
We define the neural network $f(x)=z^{(N)}$ recursively as 
\begin{equation}\label{eq:model}
x^{(k)}=\relu(z^{(k)}),\ \ z^{(k)}=W^{(k)}x^{(k-1)},\ \ x^{(0)}=x
\end{equation}
for $k\in\{1,2,\dots,N\}$ where $\relu(x)=\max\{x,0\}$ and $W^{(k)}$
denote weight matrices. Without loss of generality, we ignore biases.

Given an input $\hat{x}$ of truth class $i$, the problem of verifying
the neural network $f$ to have no adversarial example $x\approx\hat{x}$
mislabeled as the incorrect class $j\ne i$ can be posed as: 
\begin{equation}\label{eq:attack}
d_{j}=\min_{x\in\XX}\ c^{T}f(x)\quad\text{s.t.}\quad(\ref{eq:model}),
\end{equation}
where $c=e_{i}-e_{j}$ and $\XX$ is a convex input set that models
the adversarial perturbations. Popular choices include the elementwise
bound 
\[
\BB_{\infty}(\hat{x},\hat{\rho})=\{x\mid|x_{i}-\hat{x}_{i}|\leq\hat{\rho}_{i}\mbox{ for all }i\}
\]
and the $\ell_{2}$ norm ball 
\[
\BB_{2}(\hat{x},\rho)=\{x\mid\|x-\hat{x}\|_{2}\leq\rho\},
\]
where $\hat{x}\in\R^{n}$ is a center point, and $\hat{\rho}\in\R^{n}$
and $\rho\in\R$ are the radii. The resulting vector $d\in\R^{q}$
is a \emph{robustness margin} against misclassification. If $d\ge0$
over all of its elements, then there exists no adversarial example
$x$ within a distance of $\rho$ that can be misclassified.

\subsection{Semidefinite programming (SDP) relaxation}
Semidefinite relaxation is a convex relaxation method to compute lower bounds on (\ref{eq:attack}).
In this work, we focus our attention on the SDP relaxation used in \citet{brown2022unified}
that utilizes the positive/negative splitting of the preactivations
$u_{i}=x_{i}$, $v_{i}=x_{i}-z_{i}$ to rewrite the equality constraints
$x_{i}=\relu(z_{i})$ as 
\begin{gather*}
x_{i}=u_{i},\quad u_{i}v_{i}=0,\quad u_{i}\geq0,\quad v_{i}\geq0.
\end{gather*}
Adding $[1\ u_{i}\ v_{i}]^{T}[1\ u_{i}\ v_{i}]\succeq0$ and relaxing $U_{i}=u_{i}^{2}$
and $V_{i}=v_{i}^{2}$ yields the SDP relaxation of the ReLU activation
\begin{equation}
x_{i}=u_{i},\ \ u_{i}\geq0,\ \ v_{i}\geq0,\ \ \begin{bmatrix}1 & u_{i} & v_{i}\\
u_{i} & U_{i} & 0\\
v_{i} & 0 & V_{i}
\end{bmatrix}\succeq0.\label{eq:sdp_relu}
\end{equation}
Similarly, the SDP relaxation of $\BB_{2}(\hat{x},\rho)$ is given
by 
\begin{equation}\label{eq:sdp_l2}
\begin{gathered}
\sum_{i=1}^{n}U_{i}-2(u_{i}-v_{i})\hat{x}_{i}+V_{i}+\hat{x}_{i}^{2}\leq\rho^{2},\\
u_{i}\geq 0,\quad v_{i}\geq 0,\quad
\begin{bmatrix}1 & u_{i} & v_{i}\\
u_{i} & U_{i} & 0\\
v_{i} & 0 & V_{i}
\end{bmatrix}\succeq0.
\end{gathered}
\end{equation}
The SDP relaxation of (\ref{eq:attack}) can be derived via (\ref{eq:sdp_relu})
and (\ref{eq:sdp_l2}). While SDP relaxations are typically tighter than most other convex relaxation methods, existing approaches solve the SDP relaxation via interior point
method \citep{brown2022unified} or low-rank factorization method
\citep{chiu2023tight}. Those methods incur approximately cubic time
complexity and are not scalable to medium-scale models.

\section{Looseness of bound propagation for $\ell_{2}$-norm perturbations}

Linear bound propagation is one of the state-of-the-art approaches for finding upper
and lower bounds on (\ref{eq:attack}). In this section, we explain
why the approach can be unusually loose when faced with an $\ell_{2}$
perturbation set like $\XX=\BB_{2}(\hat{x},\rho)$, which is the classic
example when inter-neuron coupling strongly manifests. For simplicity,
we focus on finding a lower bound for (\ref{eq:attack}). 

At a high level, all bound propagation methods solve (\ref{eq:attack})
by defining a set of linear relaxations $x\mapsto g^{T}x+h$ that \emph{pointwise}
lower bound the original function $c^{T}f(x)$ across the input set
$\XX$, as in 
\[
\L(\XX)=\{(g,h)\mid c^{T}f(x)\ge g^{T}x+h\ \forall x\in\XX\}.
\]
The linear relaxation corresponding to each $(g,h)\in\L(\XX)$ can
be minimized to yield a valid lower bound on the original problem
(\ref{eq:attack}). This bound can be further tightened by optimizing
over the linear relaxations themselves:
\[
\min_{x\in\XX}\ c^{T}f(x)\ge\max_{(g,h)\in\L(\XX)}\min_{x\in\XX}\ g^{T}x+h.
\]
In fact, one can show by a duality argument that the bound above
is in fact exactly tight, i.e. the inequality holds with equality.
Unfortunately, the set of linear relaxations $\L(\XX)$ is also intractable
to work with.

Instead, all bound propagation methods work by constructing parameterized
families of linear relaxations $x\mapsto g(\alpha)^{T}x+h(\alpha)$
for $0\le\alpha\le1$ that provably satisfy $(g(\alpha),h(\alpha))\in\L(\XX)$.
The tightest bound on (\ref{eq:attack}) that could be obtained from
the family of relaxations then reads
\begin{equation}
\min_{x\in\XX}\ c^{T}f(x)\ge\max_{0\le\alpha\le1}\min_{x\in\XX}\ g(\alpha)^{T}x+h(\alpha).\label{eq:heuLP}
\end{equation}
Note that the inner minimization is a convex program that can be efficiently
evaluated for many common choices of $\XX$, such as the elementwise
bound or any $\ell_{p}$ norm ball. In practice, the parameter $\alpha$
can be maximized via projected gradient ascent or selected heuristically
as in \citet{zhang2018crown}. 

The tightness of the heuristic bound in (\ref{eq:heuLP}) is critically
driven by the quality of the parameterized relaxations $x\mapsto g(\alpha)^{T}x+h(\alpha)$.
The core insight of bound propagation methods is that a high-quality
choice of $g(\alpha),h(\alpha)$ satisfying the following
\[
c^{T}f(x)\ge g(\alpha)^{T}x+h(\alpha)\ \forall x\in\BB_{\infty}(\hat{x},\hat{\rho})
\]
can be constructed using the triangle relaxation of the ReLU activation,
alongside a forward-backward pass through the neural network; we refer
the reader to the Appendix~\ref{app:bound_propagation} for precise details. When the input
set is indeed an $\ell_{\infty}$-norm box $\XX=\BB_{\infty}(\hat{x},\hat{\rho})$,
\citet{salman2019convex} showed that this choice of $(g(\alpha),h(\alpha))\in\L(\BB_{\infty}(\hat{x},\hat{\rho}))$
is essentially \emph{optimal} per-neuron. This optimality provides a long-sought
explanation for the tightness of bound propagation under an $\ell_{\infty}$
adversary. 

However, when the input set $\XX$ is not an $\ell_{\infty}$-norm
box, bound propagation requires relaxing the input set $\XX\subseteq\BB_{\infty}(\hat{x},\hat{\rho})$
for the purposes of constructing $g(\alpha),h(\alpha)$. The resulting
bound on (\ref{eq:attack}) is valid by the following sequence of
inequalities
\begin{align}
\min_{x\in\XX}\ c^{T}f(x) & \ge\max_{(g,h)\in\L(\XX)}\min_{x\in\XX}\ g^{T}x+h\nonumber \\
 & \geq\max_{(g,h)\in\L(\BB_{\infty}(\hat{x},\hat{\rho}))}\min_{x\in\XX}\ g^{T}x+h\label{eq:box_relax}\\
 & \geq\max_{0\leq\alpha\leq1}\min_{x\in\XX}\ g(\alpha)^{T}x+h(\alpha).\nonumber
\end{align}
The problem is that a loose relaxation $\BB_{\infty}(\hat{x},\hat{\rho})\supseteq\XX$
causes a comparably loose relaxation $\L(\BB_{\infty}(\hat{x},\hat{\rho}))\subseteq\L(\XX)$
in (\ref{eq:box_relax}), hence introducing substantial conservatism
to the overall bound. 

The above explains the core mechanism for why bound propagation tends
to be loose for an $\ell_{2}$ adversary. The problem is that the tightest
$\ell_{\infty}$-norm box to fully contain a given $\ell_{2}$-norm
ball satisfies the following 
\[
\XX=\BB_{2}(\hat{x},\rho)\subseteq\BB_{\infty}(\hat{x},\one\rho).
\]
However, there are attacks in the box $x\in\{\pm\rho\}^{n}\subseteq\BB_{\infty}(\hat{x},\one\rho)$
with radii $\|x-\hat{x}\|_2=\sqrt{n}\rho$ that are a factor of $\sqrt{n}$ larger
than the radius $\rho$ of the original ball. Accordingly, relaxing
the $\ell_{2}$-norm ball into $\ell_{\infty}$-norm box can effectively
increase the attack radius by a factor of $\sqrt{n}$. Hence, the
resulting bounds on (\ref{eq:attack}) can also be a factor of $\sqrt{n}$
more conservative.

\newpage
\section{Proposed method}
Our core contribution in this paper is a high-quality family of linear
relaxations $x\mapsto g^{T}x+h(g,\lambda)$ for $g\in\R^n$ and $\lambda\ge0$ that provably satisfy the following
\[
c^{T}f(x)\ge g^{T}x+h(g,\lambda)\ \ \forall\ x\in\BB_{2}(\hat{x},\rho).
\]
Notice that our relaxation is constructed directly from the $\ell_{2}$-norm ball, i.e. $(g,h(g,\lambda))\in\L(\XX)$, which addresses the looseness of (\ref{eq:box_relax}) as we did not relax the $\ell_{2}$-norm ball into $\ell_{\infty}$-norm box. Due to space
constraints, we explain our construction only for the special case
of $f(x)\equiv\relu(x)$, while deferring the general case to the
Appendix~\ref{app:bound_propagation}.

One particle aspect of our construction is to take a linear relaxation from bound propagation $c^Tf(x) \geq g(\alpha)^{T}x+h(\alpha)$ for the box $\BB_{\infty}(\hat{x},\rho\1)\supseteq\BB_{2}(\hat{x},\rho)$, and then tightening the offset $h(g(\alpha),\lambda) \geq h(\alpha)$ while ensuring that it remains valid for the ball $\BB_{2}(\hat{x},\rho)$. In analogy with \citet{salman2019convex}, we prove in Section~\ref{sec:tightness}
that this choice of $h(g(\alpha),\lambda)$ is essentially optimal when $\hat x = 0$, and can therefore yield at most a factor of $\sqrt{n}$ reduction in conservatism for $\XX=\BB_{2}(\hat{x},\rho)$. At the same time, our new method
adds just one parameter $\lambda\ge0$ per layer, so it can be seamlessly
integrated into any bound propagation verifier with negligible overhead.
Integrating this technique into the $\alpha$-CROWN
verifier, we provide extensive computational verification in Section~\ref{sec:experiment}
showing that our method significantly improves verification
performance. The main theorem of our work is summarized below.

\begin{theorem}\label{thm:sdp_crown_lb} 
Given $c,\hat x\in\R^n$ and $\rho\geq 0$. The following holds
\[
c^{T}\relu(x)\geq g^{T}x+h(g,\lambda)\ \ \forall\ x\in\BB_{2}(\hat{x},\rho)
\]
for any $\lambda\geq0$ and $g\in\R^n$ where 
\[
h(g,\lambda)=-\frac{1}{2}\cdot\left(\lambda(\rho^2-\|\hat x\|_2^2)+\frac{1}{\lambda}\|\phi(g,\lambda)\|_2^2\right)
\]
and 
\[
\phi_i(g,\lambda) = \min\{c_i-g_i-\lambda\hat x_i,g_i+\lambda\hat x_i,0\}.
\]
\end{theorem}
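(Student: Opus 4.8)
The plan is to read the claimed inequality as a Lagrangian (weak-duality) relaxation of the constrained scalar-separable minimization, and then evaluate the resulting unconstrained problem coordinatewise in closed form. First I would note that the statement is equivalent to
\[
\min_{x\in\BB_{2}(\hat{x},\rho)}\ \left[c^{T}\relu(x)-g^{T}x\right]\ \ge\ h(g,\lambda),
\]
and rewrite the ball as the single quadratic constraint $\|x-\hat{x}\|_{2}^{2}\le\rho^{2}$. Dualizing this constraint with multiplier $\lambda/2\ge0$ gives, by weak duality,
\[
\min_{\|x-\hat{x}\|_{2}^{2}\le\rho^{2}}\left[c^{T}\relu(x)-g^{T}x\right]\ \ge\ \min_{x\in\R^{n}}\left[c^{T}\relu(x)-g^{T}x+\tfrac{\lambda}{2}\bigl(\|x-\hat{x}\|_{2}^{2}-\rho^{2}\bigr)\right],
\]
valid for every $\lambda\ge0$ because the penalty term is nonpositive on the feasible set. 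The normalization $\lambda/2$ is exactly what is needed to reproduce the stated offset.

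Next I would exploit separability: the relaxed objective decouples across coordinates, so the right-hand side equals $-\tfrac{\lambda}{2}\rho^{2}+\sum_{i=1}^{n}m_{i}$ where $m_{i}=\min_{t\in\R}\bigl[c_{i}\relu(t)-g_{i}t+\tfrac{\lambda}{2}(t-\hat{x}_{i})^{2}\bigr]$. It then suffices to show $m_{i}=\tfrac{\lambda}{2}\hat{x}_{i}^{2}-\tfrac{1}{2\lambda}\phi_{i}(g,\lambda)^{2}$; summing over $i$ and collecting terms recovers $h(g,\lambda)=-\tfrac12\bigl(\lambda(\rho^{2}-\|\hat{x}\|_{2}^{2})+\tfrac1\lambda\|\phi\|_{2}^{2}\bigr)$ precisely. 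To evaluate $m_{i}$ I split along the two linear pieces of $\relu$. On $t\ge0$ the objective is $(c_{i}-g_{i})t+\tfrac{\lambda}{2}(t-\hat{x}_{i})^{2}$, minimized at $t_{1}=\hat{x}_{i}-(c_{i}-g_{i})/\lambda$, which lies in $[0,\infty)$ iff $a_{1}:=c_{i}-g_{i}-\lambda\hat{x}_{i}\le0$; on $t\le0$ it is $-g_{i}t+\tfrac{\lambda}{2}(t-\hat{x}_{i})^{2}$, minimized at $t_{2}=\hat{x}_{i}+g_{i}/\lambda$, which lies in $(-\infty,0]$ iff $a_{2}:=g_{i}+\lambda\hat{x}_{i}\le0$. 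The pieces agree at the kink $t=0$ with value $\tfrac{\lambda}{2}\hat{x}_{i}^{2}$, and a direct completion of squares gives the two interior minima as $\tfrac{\lambda}{2}\hat{x}_{i}^{2}-\tfrac{1}{2\lambda}a_{1}^{2}$ and $\tfrac{\lambda}{2}\hat{x}_{i}^{2}-\tfrac{1}{2\lambda}a_{2}^{2}$ (note $a_{1}+a_{2}=c_{i}$). Comparing the admissible candidates across the four sign patterns of $(a_{1},a_{2})$ shows in every case that the global minimum equals $\tfrac{\lambda}{2}\hat{x}_{i}^{2}-\tfrac{1}{2\lambda}\min\{a_{1},a_{2},0\}^{2}=\tfrac{\lambda}{2}\hat{x}_{i}^{2}-\tfrac{1}{2\lambda}\phi_{i}^{2}$, matching the definition of $\phi_{i}$.

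The step I expect to be the main obstacle is the scalar minimization $m_{i}$ when $c_{i}<0$: there $c_{i}\relu(t)$ is concave, so the one-dimensional objective is nonconvex with a local maximum at the kink, and I cannot simply appeal to convexity to locate the minimizer. The resolution is that the strongly convex term $\tfrac{\lambda}{2}(t-\hat{x}_{i})^{2}$ (for $\lambda>0$) forces a finite global minimizer that must be one of the three candidates $t_{1}$, $t_{2}$, or $0$; the inner $\min\{\cdot,\cdot,0\}$ in the definition of $\phi_{i}$ is exactly the device that selects the correct candidate in each sign regime, which is why the four-case analysis collapses to a single clean formula. Finally I would treat the degenerate value $\lambda=0$ separately: the bound is then either vacuous (whenever some $\phi_{i}\neq0$, since $h\to-\infty$) or reduces to verifying the trivial inequality $c^{T}\relu(x)\ge g^{T}x$, so it introduces no additional difficulty and the claim holds for all $\lambda\ge0$.
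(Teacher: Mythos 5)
Your proof is correct, and it takes a genuinely different route from the paper. The paper derives $h(g,\lambda)$ by first applying the positive/negative splitting $x=u-v$, forming the SDP relaxation (\ref{eq:sdp_relu})--(\ref{eq:sdp_l2}) of (\ref{eq:hopt}), simplifying it via chordal decomposition, passing to the Lagrangian dual (an SOCP), and then eliminating the slack variables $s,t,\mu$ in closed form. You instead dualize only the single quadratic ball constraint with multiplier $\lambda/2$, invoke weak duality, and solve the resulting separable unconstrained problem exactly by elementary one-dimensional calculus on the two affine pieces of $\relu$; your coordinatewise values $\tfrac{\lambda}{2}\hat x_i^2-\tfrac{1}{2\lambda}\min\{a_1,0\}^2$ and $\tfrac{\lambda}{2}\hat x_i^2-\tfrac{1}{2\lambda}\min\{a_2,0\}^2$ check out, and you use the same identity $\max\{\min\{a_1,0\}^2,\min\{a_2,0\}^2\}=\min\{a_1,a_2,0\}^2$ that appears in the paper's proof. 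Your argument is shorter and more self-contained, needs no semidefinite machinery, and moreover shows that for each fixed $\lambda$ the inner relaxed problem is solved \emph{exactly}, so the only slack in the bound is the duality gap in $\lambda$ --- in effect demonstrating that the SDP relaxation's dual bound coincides with the plain partial Lagrangian relaxation for this particular problem. What the paper's route buys in exchange is the explicit connection to the SDP relaxation framework that motivates the method's name, generalizes to the ellipsoid-plus-box extension of Theorem~\ref{thm:sdp_crown_lb_extension}, and feeds into the exact-tightness result of Theorem~\ref{thm:sdp_tightness}. Your separate treatment of $\lambda=0$ (vacuous unless $\phi\equiv 0$, in which case $c_i-g_i\ge 0$ and $g_i\ge 0$ make $c^T\relu(x)\ge g^Tx$ immediate) is also sound.
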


Let us explain how Theorem~\ref{thm:sdp_crown_lb} can be used to lower bound
the attack problem (\ref{eq:attack}) in the special case of $f(x)\equiv\relu(x)$
and $\XX=\BB_{2}(\hat{x},\rho)$. First, we use the standard bound
propagation procedure to compute linear relaxations $x\mapsto g(\alpha)^{T}x+h(\alpha)$ that provably satisfy $(g(\alpha),h(\alpha))\in\L(\BB_{\infty}(\hat{x},\rho\one))$ for $0\leq\alpha\leq 1$. Then, we replace $h(\alpha)$ with the new choice $h(g(\alpha),\lambda)$ specified
in Theorem~\ref{thm:sdp_crown_lb} to ensure that $(g(\alpha),h(g(\alpha),\lambda))\in\L(\BB_{2}(\hat{x},\rho))$ for $\lambda\geq 0$. Both $\alpha$ and $\lambda$ can then be optimized to provide a tighter relaxation.
We note that the attack problem can also be lower bounded by directly optimizing over $g$ and $\lambda\geq 0$ (by treating $\alpha$ as unconstrained variables), and Theorem~\ref{thm:sdp_crown_lb} can be extended to handle more general input set $\XX$ such as an ellipsoid. We provide more details for these extensions in the Appendix~\ref{app:extension}.

In the remainder of this section, we provide a proof of Theorem \ref{thm:sdp_crown_lb}.
\subsection{Proof of Theorem~\ref{thm:sdp_crown_lb}}
Given any $c,g\in\R^n$ the process of finding the tightest possible $h$ such that $c^T\relu(x)\geq g^Tx+h$ holds within  within $\BB_2(\hat x,\rho)$ admits the following generic problem
\begin{equation}\label{eq:hopt}
\min_{x\in\R^n}\ c^T\relu(x)-g^Tx\quad\text{s.t.}\quad\|x-\hat x\|_2^2\leq \rho^2.
\end{equation}
Applying the positive/negative splitting $x=u-v$ where $u,v\geq 0$ and $u\odot v=0$ yields the following
\begin{equation*}
\begin{aligned}
\min_{u,v\in\R^n}\ & c^Tu-g^T(u-v)\\
\text{s.t. }\ &\|u\|_2^2 - 2(u-v)^T\hat x + \|v\|_2^2\leq \rho^2-\|\hat x\|_2^2\\
& u\geq 0,\quad v\geq 0,\quad u\odot v=0.
\end{aligned}
\end{equation*}
Though (\ref{eq:hopt}) is nonconvex due to the product of the two variables $u$ and $v$, a tight lower bound can be efficiently approximated via SDP relaxation described in (\ref{eq:sdp_relu}) and (\ref{eq:sdp_l2}). The SDP relaxation of (\ref{eq:hopt}) reads:
\begin{equation*}
	\begin{aligned}
		\min_{u,v,U,V\in\R^n}\ & c^Tu-g^T(u-v)\\
		\text{s.t. }\quad & (U+V)^T\1-2(u-v)^T\hat x \leq \rho^2-\|\hat x\|_2^2,\\
		&u\geq 0,\quad v\geq 0, \\
		& \begin{bmatrix}1&u_i&v_i\\u_i&U_i&0\\v_i&0&V_i\end{bmatrix}\succeq 0 \text{ for $i=1,\ldots,n$}.
	\end{aligned}
\end{equation*}
The SDP relaxation can be further simplified by applying Theorem 9.2 of \cite{vandenberghe2015chordal}:
\begin{equation*}
	\begin{aligned}
		\min_{\tilde u,\tilde v,u,v,U,V\in\R^n}\ & c^Tu-g^T(u-v)\\
		\text{s.t. }\quad\ & (U+V)^T\1-2(u-v)^T\hat x \leq \rho^2-\|\hat x\|_2^2,\\
		&u\geq 0,\quad v\geq 0,\quad \tilde u + \tilde v = 1,\\  &\begin{bmatrix}\tilde u_i&u_i\\u_i&U_i\end{bmatrix}\succeq 0,\quad 
\begin{bmatrix}\tilde v_i&v_i\\v_i&V_i\end{bmatrix}\succeq 0
	\end{aligned}
\end{equation*}
for $i=1,\dots,n$. Let $\lambda\in\R$ denote the dual variables of the first inequality constraints and $s,t,\mu\in\R^n$ denote the dual variable for $u\geq 0$, $v\geq 0$ and $\tilde u + \tilde v = 1$, respectively. The Lagrangian dual is given by:
 \begin{equation*}
 	\begin{aligned}
 		\max_{\lambda,s,t,\mu}\ & -\frac{1}{2}\cdot\left(\lambda(\rho^2-\|\hat x\|_2^2)+\mu^T\1\right)\\
 		\text{s.t. }\  \ & \begin{bmatrix}\mu_i&c_i-g_i-\lambda\hat x_i-s_i\\c_i-g_i-\lambda\hat x_i-s_i&\lambda\end{bmatrix}\succeq 0,\\
 		& \begin{bmatrix}\mu_i&g_i+\lambda\hat x_i-t_i\\g_i+\lambda\hat x_i-t_i&\lambda\end{bmatrix}\succeq 0,\\
         &\lambda\geq 0,\quad s\geq 0,\quad t\geq 0,\quad \mu\geq 0,
 	\end{aligned}
 \end{equation*}
 for $i=1,\dots,n$. For a $2\times 2$ matrix $X$, note that $X\succeq 0$ holds if and only if $\det(X)\ge 0$ and $\diag(X)\ge 0$. Applying this insight yields a second-order cone programming (SOCP) problem
\begin{equation}\label{eq:socp}
	\begin{aligned}
		\max_{\lambda,s,t,\mu}\ & -\frac{1}{2}\cdot\left(\lambda(\rho^2-\|\hat x\|_2^2)+\mu^T\1\right)\\
		\text{s.t. }\  \ &\lambda \mu_i\geq (c_i-g_i-s_i-\lambda\hat x_i)^2,\\
		&\lambda \mu_i\geq (g_i-t_i+\lambda\hat x_i)^2,\\
        &\lambda\geq 0,\quad s\geq 0,\quad t\geq 0,\quad \mu\geq 0,
	\end{aligned}
\end{equation}
for $i=1,\dots,n$. Due to space constraints, we defer the detailed derivation for the dual problem (\ref{eq:socp}) to the Appendix~\ref{app:dual}. We are now ready to prove Theorem~\ref{thm:sdp_crown_lb}. 
\begin{proof}
   Given any $c,g\in\R^n$. Let $\hat\rho = \rho^2-\|\hat x\|_2^2$, $a_i = c_i - g_i - \lambda\hat x_i$ and $b_i = g_i+\lambda\hat x_i$. Fixing any $\lambda\geq 0$ and optimizing $\mu$ in (\ref{eq:socp}) yields
\begin{align*}
	&\max_{\lambda,s,t\geq 0}\ -\frac{1}{2}\cdot\left(\lambda\hat\rho+\sum_{i=1}^{n}\frac{\max\left\{(a_i-s_i)^2,(b_i-t_i)^2\right\}}{\lambda}\right)\\
	=&\max_{\lambda\geq 0}\ -\frac{1}{2}\cdot\left(\lambda\hat\rho+\sum_{i=1}^{n}\frac{\min\left\{a_i,b_i,0\right\}^2}{\lambda}\right)\\
	=&\max_{\lambda\geq 0}\ h(g,\lambda)
\end{align*}
where the first equality follows from $\min_{s_i\geq 0} (a_i-s_i)^2=\min\{a_i,0\}^2$ and $\min_{t_i\geq 0} (b_i-t_i)^2=\min\{b_i,0\}^2$, and $\max\{\min\{a_i,0\}^2,\min\{b_i,0\}^2\}=\min\{a_i,b_i,0\}^2$ for any $a_i,b_i\in\R$. Since $h(g,\lambda)$ is a lower bound on (\ref{eq:hopt}) for any $\lambda\geq 0$, we have $c^T\relu(x)\geq g^Tx+h(g,\lambda)\ \forall x\in\BB_{2}(\hat{x},\rho)$ for any $g\in\R^n$, $\lambda\geq 0$.
\end{proof}
\begin{figure*}[t!]
    \centering
    \includegraphics[width=0.5\textwidth]{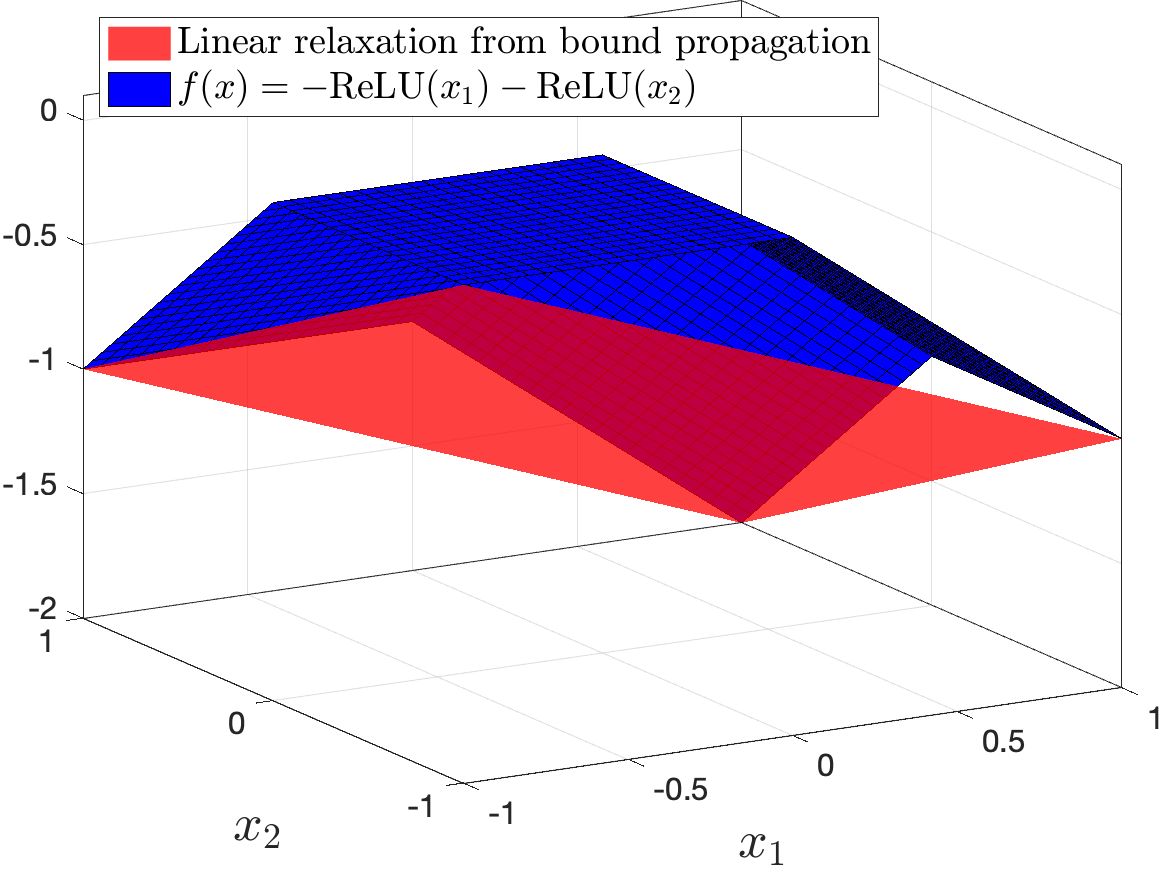}%
    \includegraphics[width=0.5\textwidth]{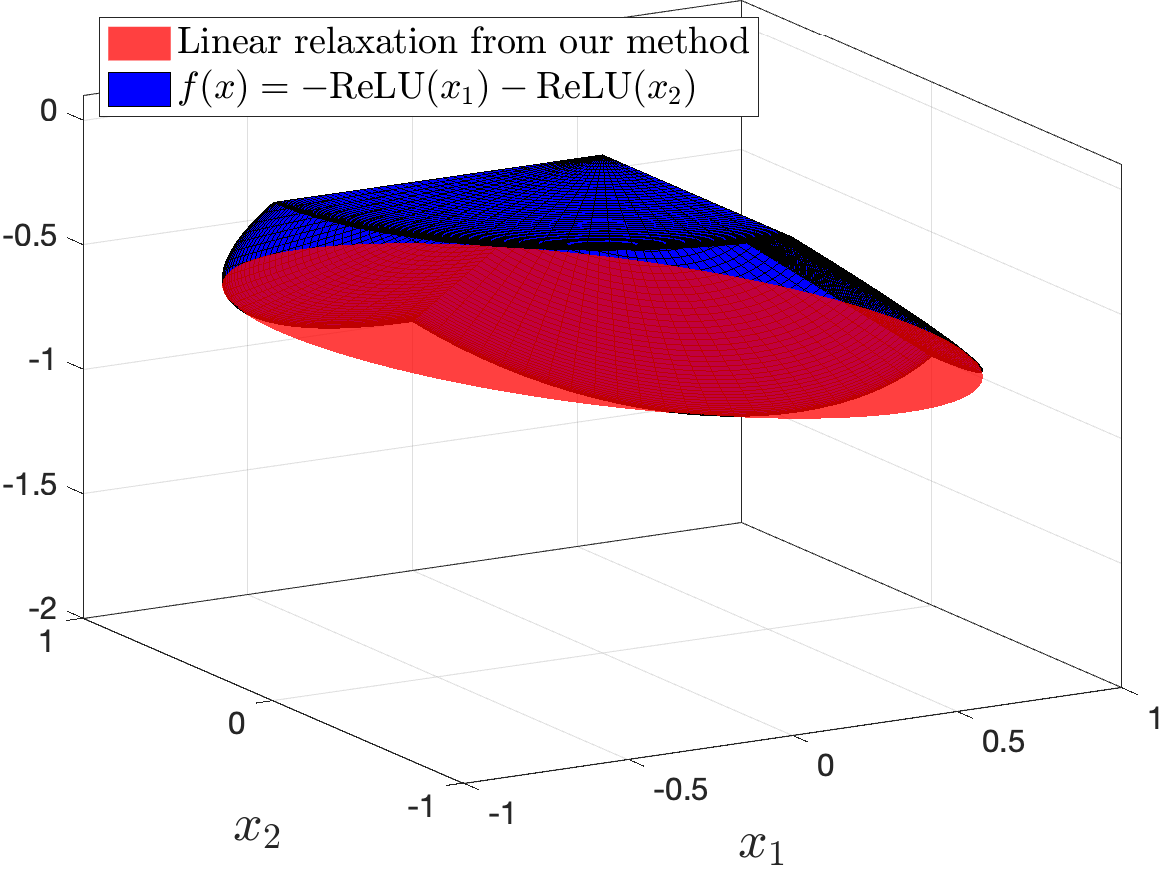}
    \caption{\textbf{Comparing linear relaxations within $\ell_2$-norm ball from our method and bound propagation.} Consider a task of finding a linear relaxation of $f(x)=-\relu(x_1)-\relu(x_2)$ on $\BB_2(0,1)$. \textbf{(Left.)} Bound propagation finds the tightest possible linear relaxation on $\BB_\infty(0,1)$, however, such relaxation is not the tightest on $\BB_2(0,1)$. \textbf{(Right.)} Our method finds the tightest possible linear relaxation on $\BB_2(0,1)$, which is tighter than bound propagation by a factor of $\sqrt{2}$.
    }\label{fig:our_lb}
    \vspace{-1em}
\end{figure*}
\section{Tightness analysis}\label{sec:tightness}
In this section, we provide theoretical  analysis on the tightness of our SDP relaxation in the special case where $f(x)\equiv\relu(x)$ and $\hat x = 0$. We show that our SDP relaxation (\ref{eq:socp}) is exactly tight in this case and guarantees at most a factor of $\sqrt{n}$ improvement over bound propagation when computing linear relaxation within $\BB_2(0,\rho)$. We begin by characterizing the linear relaxation from bound propagation $x\to g(\alpha)^Tx+h(\alpha)$ for the box $x\in\BB_\infty(0,\rho\1)\supseteq \BB_2(0,\rho)$ below.  

\begin{lemma}\label{lem:LiRPA_zero_center}
Given $c\in\R^n$ and $\rho>0$, the bound 
\begin{align*}
	g(\alpha)&=\frac{1}{2}\min\{c,0\}+\alpha\odot\max\{c,0\},\\
	h(\alpha)&=-\rho\|\min\{g(\alpha),0\}\|_1
\end{align*}
satisfies
\[
    c^T\relu(x)\geq g(\alpha)^Tx+h(\alpha)\ \ \forall\ x\in\BB_\infty(0,\rho\1)
\]
for any $0\leq\alpha\leq 1$.
\end{lemma}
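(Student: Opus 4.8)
The plan is to exploit the separability of both the objective $c^{T}\relu(x)=\sum_{i=1}^{n}c_{i}\relu(x_{i})$ and the constraint $x\in\BB_\infty(0,\rho\1)$, which decouples into $x_{i}\in[-\rho,\rho]$ for each coordinate. This reduces the claim to constructing, for each $i$, a scalar affine lower bound $c_{i}\relu(x_{i})\ge g_{i}(\alpha)\,x_{i}+h_{i}$ that is valid on $[-\rho,\rho]$, and then summing the per-coordinate intercepts $h_{i}$ to match the global offset $h(\alpha)$.

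First I would split on the sign of $c_{i}$ and invoke the two standard ReLU relaxations. When $c_{i}\ge0$, I would use the lower relaxation $\relu(t)\ge\alpha_{i}t$, which holds for every $t\in\R$ and any $\alpha_{i}\in[0,1]$ (one checks both signs of $t$ separately); multiplying by $c_{i}\ge0$ gives slope $g_{i}=\alpha_{i}c_{i}=\alpha_{i}\max\{c_{i},0\}$ with zero intercept. When $c_{i}<0$, I would use the upper ``triangle'' relaxation on $[-\rho,\rho]$, namely the chord $\relu(t)\le\tfrac12(t+\rho)$ joining $(-\rho,0)$ and $(\rho,\rho)$, which lies above $\relu$ by convexity; multiplying by $c_{i}<0$ flips the inequality and yields $c_{i}\relu(x_{i})\ge\tfrac{c_{i}}{2}x_{i}+\tfrac{c_{i}\rho}{2}$, i.e. slope $g_{i}=\tfrac12 c_{i}=\tfrac12\min\{c_{i},0\}$ and intercept $\tfrac{c_{i}\rho}{2}\le0$. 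Combining the two cases reproduces exactly $g(\alpha)=\tfrac12\min\{c,0\}+\alpha\odot\max\{c,0\}$.

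It remains to verify the intercept bookkeeping: summing the per-coordinate intercepts gives $\sum_{i:c_{i}<0}\tfrac{c_{i}\rho}{2}$, and I would check that this equals $h(\alpha)=-\rho\|\min\{g(\alpha),0\}\|_{1}$. Indeed $g_{i}(\alpha)<0$ occurs precisely when $c_{i}<0$ (where $g_{i}=\tfrac{c_{i}}{2}$), while $g_{i}\ge0$ whenever $c_{i}\ge0$; hence $|\min\{g_{i},0\}|=-\tfrac{c_{i}}{2}$ on the negative coordinates and $0$ elsewhere, so $-\rho\|\min\{g(\alpha),0\}\|_{1}=\sum_{i:c_{i}<0}\tfrac{c_{i}\rho}{2}$, as required.

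The argument is essentially routine once the decomposition is set up; the closest thing to an obstacle is the care needed at two points: (i) confirming that the upper chord is valid specifically on $[-\rho,\rho]$, since the intercept $\tfrac{\rho}{2}$ is pinned to the endpoints $l=-\rho$ and $u=\rho$, which here coincide with the box radius; and (ii) the sign bookkeeping that ties the scattered per-neuron intercepts to the single $\ell_{1}$ expression, which hinges on the observation that the sign of $g_{i}(\alpha)$ is governed entirely by the sign of $c_{i}$.
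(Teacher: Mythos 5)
Your proposal is correct and follows essentially the same route as the paper's proof: both rest on the per-coordinate inequalities $\alpha_i x_i \leq \relu(x_i) \leq \tfrac{1}{2}x_i + \tfrac{\rho}{2}$, split on the sign of $c_i$, and sum the intercepts $\tfrac{c_i\rho}{2}$ over the negative coordinates to recover $-\rho\|\min\{g(\alpha),0\}\|_1$. Your write-up just spells out the bookkeeping that the paper's one-line proof leaves implicit.
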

\begin{proof}
    Notice that $\alpha_ix_i\leq \relu(x_i)\leq \frac{1}{2}x_i+\frac{\rho}{2}$ for any $0\leq\alpha_i\leq 1$. Therefore, we have $g(\alpha)=\frac{1}{2}\min\{c,0\}+\alpha\odot\max\{c,0\}$ and $h(\alpha)=\rho\sum_{i=1}^n\frac{1}{2}\min\{c_i,0\}=-\rho\sum_{i=1}^n|\min\{g_i(\alpha),0\}|=-\rho\|\min\{g(\alpha),0\}\|_1$.
\end{proof} 

In the following Theorem, we show that our method yields $h(g(\alpha),\lambda)=-\rho\|\min\{g(\alpha),0\}\|_2$ when $\lambda\geq 0$ is chosen optimally. On the other hand, the linear relaxation from bound propagation yields $h(\alpha)=-\rho\|\min\{g(\alpha),0\}\|_1\leq h(g(\alpha),\lambda)$ as in Lemma~\ref{lem:LiRPA_zero_center}, which is looser than our method by at most a factor of $\sqrt{n}$.

\begin{theorem}\label{thm:zero_center}
	Given $c\in\R^n$ and $\rho>0$. The following holds
    \[
    c^T\relu(x)\geq g^Tx-\rho\|\min\{c-g,g,0\}\|_2\
    \]  
    for all $x\in\BB_2(0,\rho)$ for any $g\in\R^n$.
\end{theorem}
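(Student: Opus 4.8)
The plan is to obtain Theorem~\ref{thm:zero_center} as a direct specialization of the already-established Theorem~\ref{thm:sdp_crown_lb} to the center $\hat x = 0$, followed by an elementary optimization over the free parameter $\lambda$. Since Theorem~\ref{thm:sdp_crown_lb} has already done the heavy lifting of certifying validity of the linear relaxation $g^Tx + h(g,\lambda)$ via the SDP relaxation and its SOCP dual, there is no need to re-derive anything from scratch; I only need to simplify the offset and tune $\lambda$.

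First I would substitute $\hat x = 0$ into the formulas of Theorem~\ref{thm:sdp_crown_lb}. This gives $\rho^2 - \|\hat x\|_2^2 = \rho^2$ and, crucially, collapses the coordinatewise function to $\phi_i(g,\lambda) = \min\{c_i - g_i,\, g_i,\, 0\}$, which no longer depends on $\lambda$. Writing $\phi = \min\{c-g,\,g,\,0\}$ elementwise, the offset becomes
\[
h(g,\lambda) = -\tfrac{1}{2}\Bigl(\lambda\rho^2 + \tfrac{1}{\lambda}\|\phi\|_2^2\Bigr).
\]
Theorem~\ref{thm:sdp_crown_lb} guarantees that $c^T\relu(x) \ge g^Tx + h(g,\lambda)$ holds on $\BB_2(0,\rho)$ for \emph{every} $\lambda \ge 0$, so in particular the bound remains valid for whatever choice of $\lambda$ maximizes $h(g,\lambda)$.

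Next I would carry out that maximization. Maximizing $h(g,\lambda)$ over $\lambda > 0$ is equivalent to minimizing $\lambda\rho^2 + \lambda^{-1}\|\phi\|_2^2$, and the arithmetic--geometric mean inequality yields $\lambda\rho^2 + \lambda^{-1}\|\phi\|_2^2 \ge 2\rho\|\phi\|_2$, with equality at $\lambda^\star = \|\phi\|_2/\rho$. Hence $\max_{\lambda\ge 0} h(g,\lambda) = -\rho\|\phi\|_2 = -\rho\|\min\{c-g,\,g,\,0\}\|_2$, and substituting $\lambda = \lambda^\star$ into the inequality from the previous step produces exactly the claimed bound.

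The only point that requires a moment of care is the degenerate case $\|\phi\|_2 = 0$, where the optimizer $\lambda^\star = 0$ makes the $\lambda^{-1}$ term ill-defined. I would handle this by a limiting argument: as $\lambda \to 0^+$ we have $h(g,\lambda) \to 0 = -\rho\|\phi\|_2$, so the bound $c^T\relu(x) \ge g^Tx$ still holds and matches the stated right-hand side. Aside from this boundary bookkeeping, there is no substantive obstacle, as the validity certificate is inherited wholesale from Theorem~\ref{thm:sdp_crown_lb} and the remaining work is a one-line AM--GM optimization.
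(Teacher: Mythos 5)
Your proposal is correct and follows essentially the same route as the paper: specialize Theorem~\ref{thm:sdp_crown_lb} to $\hat x = 0$, note that $\phi$ reduces to $\min\{c-g,g,0\}$, and optimize $\lambda$ via the identity $2\sqrt{ab}=\min_{x\ge 0}(ax+b/x)$ (your AM--GM step). The only difference is that you explicitly handle the degenerate case $\|\phi\|_2=0$ by a limiting argument, which the paper leaves implicit in that identity.
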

\begin{proof}
	Setting $\hat x=0$ in $h(g,\lambda)$ and optimizing over $\lambda\geq 0$ yields
	\begin{align*}
		&\max_{\lambda\geq 0}\ -\frac{1}{2}\cdot\left(\lambda\rho^2+\frac{1}{\lambda}\|\min\left\{c-g,g,0\right\}\|_2^2\right)\\
		=&-\rho\|\min\left\{c-g,g,0\right\}\|_2
	\end{align*}
    where the equality follows from $2\sqrt{ab}=\min_{x\geq 0}ax+b/x$ for any $a,b\geq 0$.
\end{proof}
We obtain $h(g(\alpha),\lambda)=-\rho\|\min\{g(\alpha),0\}\|_2$ by substituting $g=g(\alpha)$ in Theorem~\ref{thm:zero_center}. Notice that $\min\{g(\alpha),0\} = \min\{c-g(\alpha),g(\alpha),0\}$ from Lemma~\ref{lem:LiRPA_zero_center}.  Theorem~\ref{thm:zero_center} guarantees at most a factor of $\sqrt{n}$ improvement over bound propagation when $\hat x= 0$, which we provide a simple illustration in Figure~\ref{fig:our_lb}.

Finally, we show that our SDP relaxation (\ref{eq:socp}) is exactly tight in the following Theorem, i.e., both (\ref{eq:socp}) and (\ref{eq:hopt}) attain the same optimal value. Therefore, the choice of $h(g(\alpha),\lambda)$ is optimal. 

\begin{table*}[!t]
\centering
\caption{\textbf{Verified accuracy under $\ell_2$-norm perturbations.} We report the verified accuracy (\%) for 200 images. For each method, we also report the average verification time (in seconds or hours), except for LipNaive and LipSDP, where we report the total time for computing the Lipschitz constant. The upper bound on verified accuracy is estimated using projected gradient descent. A dash "-" indicates the model could not be evaluated due to excessive computational time.}
\label{tab:verified_accuracy}
\resizebox{\textwidth}{!}{ 
\begin{threeparttable}
\begin{tabular}{l|c|c|c|c|c|c|c|c@{\hskip 5pt}c|c|c}
\toprule
  & Upper & \textbf{SDP-CROWN} & GCP-CROWN & BICCOS & $\beta$-CROWN & $\alpha$-CROWN & LipNaive & \multicolumn{2}{c|}{LipSDP} & LP-All & BM-Full\\
  & Bound & (Ours) &  &  &  & & & (split=2) & (no split) & &\\
\midrule
\multicolumn{12}{c}{\textbf{MNIST Model}$^\dagger$} \\
\midrule
MLP  & 54\% & 32.5\% (2.5s)& 41\% (173s) & 38\% (198s) & 36\% (302s)&1.5\% (1.2s)&29\% (0.02s)&29.5\% (19s)&30.5\% (62s)&9\% (75s)&\textbf{53\%} (0.3h)\\[2pt]
ConvSmall & 84.5\%&\textbf{81.5\%} (12s)&19.5\% (248s)&17\% (265s)&16\% (257s)&0\% (17s)&77.5\% (0.1s)& 78\% (875s)& 78.5\% (0.9h)& 10\% (0.6h)& -\\[2pt]
ConvLarge &84\%&\textbf{79.5\%} (88s)&0\% (309s)&0\% (304s)&0\% (307s)&0\% (66s)&77\% (1s)&-&-&-&-\\
\midrule
\multicolumn{12}{c}{\textbf{CIFAR-10 Model}$^\ddagger$} \\
\midrule
CNN-A &55.5\%&\textbf{49\%} (12s)&20\% (210s)&20\% (224s)&20\% (201s)&7.5\% (3.8s)&39\% (0.2s)& 39\% (1.7h)&-&-&-\\[2pt]
CNN-B &59.5\%&\textbf{49.5\%} (16s)&3\% (290s)&3\% (302s)&3\% (193s)&0\% (8.7s)&33\% (0.3s)&-&-&-&-\\[2pt]
CNN-C &47\%&\textbf{42.5\%} (10s)&35.5\% (96s)&36\% (101s)&35.5\% (63s)&24.5\% (5.8s)&36.5\% (0.2s)& 37\% (0.5h)&38.5\% (1h)&24.5\% (0.3h)&-\\[2pt]
ConvSmall & 52.5\%&\textbf{43.5\%} (9s)&18\% (225s)&18\% (220s)&17.5\% (146s)&6\% (4.4s)&33\% (0.2s)& 33.5\% (1.2h)&-&-&-\\[2pt]
ConvDeep & 50.5\%&\textbf{46\%} (25s)&31\% (133s)&31.5\% (133s)&30.5\% (133s)&22.5\% (9.2s)&39.5\% (0.3s)& 39.5\% (1.9h)&-&-&-\\[2pt]
ConvLarge & 72.5\%&\textbf{63.5\%} (73s)&6\% (286s)&6\% (282s)&5\% (289s)&2.5\% (47s)&47.5\% (1.2s)&-&-&-&-\\
\bottomrule
\end{tabular}
\begin{tablenotes}
{\large
\item[$\dagger$] The $\ell_2$-norm perturbation is set to be $\rho=1.0$ for MLP, and $\rho=0.3$ for both ConvSmall and ConvLarge.
\item[$\ddagger$] The $\ell_2$-norm perturbation is set to be $\rho=8/255$ for ConvLarge, and $\rho=24/255$ for all the other models.
}
\end{tablenotes}
\end{threeparttable}
}
\end{table*}

\begin{theorem}\label{thm:sdp_tightness}
	Given $c\in\R^n$ and $\rho>0$. The following holds
    \[
    -\rho\|\min\{c-g,g,0\}\|_2 = \min_{\|x\|_2\leq\rho} c^T\relu(x)-g^Tx\
    \]  
    for any $g\in\R^n$.
\end{theorem}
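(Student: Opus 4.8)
The statement is an exact-tightness (zero duality gap) claim, and one half of it is already in hand: Theorem~\ref{thm:zero_center} gives $c^{T}\relu(x)\ge g^{T}x-\rho\|\min\{c-g,g,0\}\|_2$ for all $x\in\BB_2(0,\rho)$, which is exactly the inequality $\min_{\|x\|_2\le\rho} c^{T}\relu(x)-g^{T}x\ge -\rho\|\min\{c-g,g,0\}\|_2$. So the plan is to establish the reverse inequality, and the natural way to do this is constructively: exhibit a feasible point $x^\star$ with $\|x^\star\|_2\le\rho$ at which the objective equals $-\rho\|\min\{c-g,g,0\}\|_2$. (Alternatively, one can evaluate the scalar reduction below exactly and obtain both directions at once, recovering Theorem~\ref{thm:zero_center} as a byproduct.)

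The key structural observation is that the objective $\Psi(x)=c^{T}\relu(x)-g^{T}x=\sum_{i=1}^{n}\psi_i(x_i)$ decouples across coordinates, and each $\psi_i$ is positively homogeneous and piecewise linear with $\psi_i(x_i)=(c_i-g_i)x_i$ for $x_i\ge 0$ and $\psi_i(x_i)=-g_i x_i$ for $x_i\le 0$. Hence, for a prescribed magnitude $|x_i|=r_i$, the smallest value of $\psi_i$ is $r_i\min\{c_i-g_i,\,g_i\}$, attained by placing $x_i$ on the positive branch when $c_i-g_i\le g_i$ and on the negative branch otherwise. This reduces the ball-constrained minimization to the scalar problem of choosing radii $r\ge 0$ with $\|r\|_2\le\rho$ to minimize $\sum_i \beta_i r_i$, where $\beta_i=\min\{c_i-g_i,\,g_i\}$; indeed $\min_{\|x\|_2\le\rho}\Psi(x)=\min_{r\ge 0,\,\|r\|_2\le\rho}\sum_i\beta_i r_i$, with the ``$\ge$'' coming from $\psi_i(x_i)\ge\beta_i|x_i|$ and the ``$\le$'' from picking signs that realize $\psi_i(x_i)=\beta_i r_i$.

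I would then finish by evaluating the scalar problem. Coordinates with $\beta_i>0$ take $r_i=0$, so the objective equals $\sum_i \min\{\beta_i,0\}\,r_i$; since $\min\{\beta_i,0\}\le 0$ and $r\ge 0$, Cauchy--Schwarz gives $\sum_i\min\{\beta_i,0\}\,r_i\ge -\|r\|_2\,\|\min\{\beta,0\}\|_2\ge -\rho\|\min\{\beta,0\}\|_2$, and equality is forced by $r_i=\rho\,|\min\{\beta_i,0\}|/\|\min\{\beta,0\}\|_2$. Using $\min\{\beta_i,0\}=\min\{c_i-g_i,\,g_i,\,0\}$, this value is precisely $-\rho\|\min\{c-g,g,0\}\|_2$; unwinding $r$ back to $x^\star$ (each active coordinate signed according to which of $c_i-g_i$ or $g_i$ attains $\beta_i$) produces a feasible minimizer attaining the bound, which completes the equality.

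The construction is routine once the per-coordinate reduction is established, so the only genuine care lies in (i) checking that the sign choice realizing $\min\{c_i-g_i,\,g_i\}$ is consistent with the $\relu$ branch used to define $\psi_i$, and (ii) the degenerate case $\min\{c-g,g,0\}=0$, where both sides vanish and $x^\star=0$ suffices. I expect the modest bookkeeping over which ReLU branch is active, rather than any substantive difficulty, to be the main thing to get right.
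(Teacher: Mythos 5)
Your proposal is correct and follows essentially the same route as the paper: the sign-based per-coordinate decomposition of $c^T\relu(x)-g^Tx$ is exactly the paper's positive/negative splitting $x=u-v$, the reduction to minimizing $\sum_i \min\{c_i-g_i,g_i,0\}\,r_i$ over $r\ge 0$, $\|r\|_2\le\rho$ matches the paper's substitution of $y_i$ by cases, and the final Cauchy--Schwarz evaluation is the same closing step. Your version is slightly more explicit about attainment and the degenerate case $\min\{c-g,g,0\}=0$, but there is no substantive difference.
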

\begin{proof}
    Applying the positive/negative splitting $x=u-v$ where $u,v\geq 0$ and $u\odot v=0$ yields
    \[
    \min_{\substack{u,v\geq 0,\\u\odot v=0 }} \ \sum_{i=1}^{n}(c_i-g_i)u_i + g_iv_i\ \ \text{s.t.}\ \ \sum_{i=1}^{n} u_i^2 + v_i^2 \leq \rho^2.
    \]
    For each $i$, we substitute a variable $y_i$ according to the following three cases. Case 1: $c_i-g_i\leq\min\{g_i,0\}$, we have $u_i^\star\geq 0$ and $v_i^\star=0$; therefore we set $y_i=u_i$. Case 2: $g_i\leq\min\{c_i-g_i,0\}$, we have $u_i^\star=0$ and $v_i^\star\geq 0$; therefore we set $y_i=v_i$. Case 3: $0\leq\min\{c_i-g_i,g_i\}$, we have $u_i^\star=v_i^\star=0$; therefore we simply let $y_i\geq 0$. Substituting each $y_i$ yields
    \[
    \min_{\substack{y\geq 0}} \ \sum_{i=1}^{n}y_i\cdot \min\{c_i-g_i,g_i,0\}\ \ \text{s.t.}\ \ \|y\|_2 \leq \rho,
    \]
    which attains optimal value $-\rho\|\min\{c-g,g,0\}\|_2$.
\end{proof}

For the general case with $\hat x\neq 0$ and network $f(x)$ defined in (\ref{eq:model}), the improvement of our method cannot be analyzed analytically; instead, we present empirical validation in Section~\ref{sec:empirical_tightness} to show that our method can be tighter than bound propagation in general settings.



\section{Experiments}\label{sec:experiment}
In this section, we compare the practical performance of our proposed method against several state-of-the-art neural network verifiers for certifying $\ell_2$ adversaries. The source code of our proposed method is available at \mbox{\url{https://github.com/Hong-Ming/SDP-CROWN}}.

\vspace{-0.2em}
\paragraph{Methods.}
\mbox{SDP-CROWN} denotes our proposed method. The implementation details for SDP-CROWN can be found in Appendix~\ref{app:bound_propagation}. We compare SDP-CROWN against the following verifiers based on bound propagation: \mbox{$\alpha$-CROWN} \cite{xu2021fast}, a bound propagation verifier with gradient-optimized bound propagation; \mbox{$\beta$-CROWN} \cite{wang2021beta}, a verifier based on $\alpha$-CROWN that can additionally handle split constraints for ReLU neurons; \mbox{GCP-CROWN} \cite{zhang2022general} and \mbox{BICCOS} \cite{zhou2024scalable}, verifiers based on $\beta$-CROWN that can additionally handle general cutting plane constraints. Since GCP-CROWN and BICCOS use mixed-integer programming (MIP) solvers to find cutting planes, we add the $\ell_2$-norm constraint into the MIP formulation of (\ref{eq:attack}), so that all cutting planes generated from the MIP will consider the $\ell_2$-norm constraint rather than the enclosing $\ell_\infty$-norm constraint. We defer the detailed hyperparameter settings for bound propagation methods to the Appendix~\ref{app:experiment}.

We also compare SDP-CROWN against the following verifiers based on estimating an upper bound on the global Lipschitz constant: \mbox{LipNaive}, a verifier estimates the Lipschitz upper bound using the Lipschitz constant of each layer (as in Section 3 of \citet{gouk2021regularisation}); and \mbox{LipSDP} \cite{fazlyab2019efficient}, a verifier estimates the Lipschitz upper bound based on solving SDP relaxations of the Lipschitz constant estimation problem. Specifically, LipNaive lower bounds $c^Tf(x)$ within $\BB_2(\hat x,\rho)$ through $c^Tf(x) \geq c^Tf(\hat x) - \rho\cdot\|c^TW^{(N)}\|_2\cdot\|W^{(N-1)}\|_2\cdots \|W^{(1)}\|_2$ where $\|W\|_2$ denotes the spectral $\ell_2$-norm of a matrix $W$.

Finally, we compare SDP-CROWN against the following verifiers based on directly solving convex relaxations of the verification problem (\ref{eq:attack}): \mbox{LP-All} \cite{salman2019convex}, a verifier solves an LP relaxation of (\ref{eq:attack}) that uses the tightest possible preactivation bounds found by recursively solving LP problems for each preactivation; and \mbox{BM-Full} \cite{chiu2023tight}, a verifier solves an SDP relaxation of (\ref{eq:attack}) that uses the same preactivation bounds in LP-All. For LP-All and BM-Full, we use the $\ell_2$-norm as the input constraint, as in $x\in\BB_2(\hat x,\rho)$. The complexity of BM-Full and LP-All is cubic with respect to the number of preactivations, and therefore they are not scalable to most of the models used in our experiment.

\begin{figure*}[!t]
    \vspace{1em}
    \centering
    \begin{subfigure}{0.333\textwidth}
      \centering
      \includegraphics[width=\linewidth]{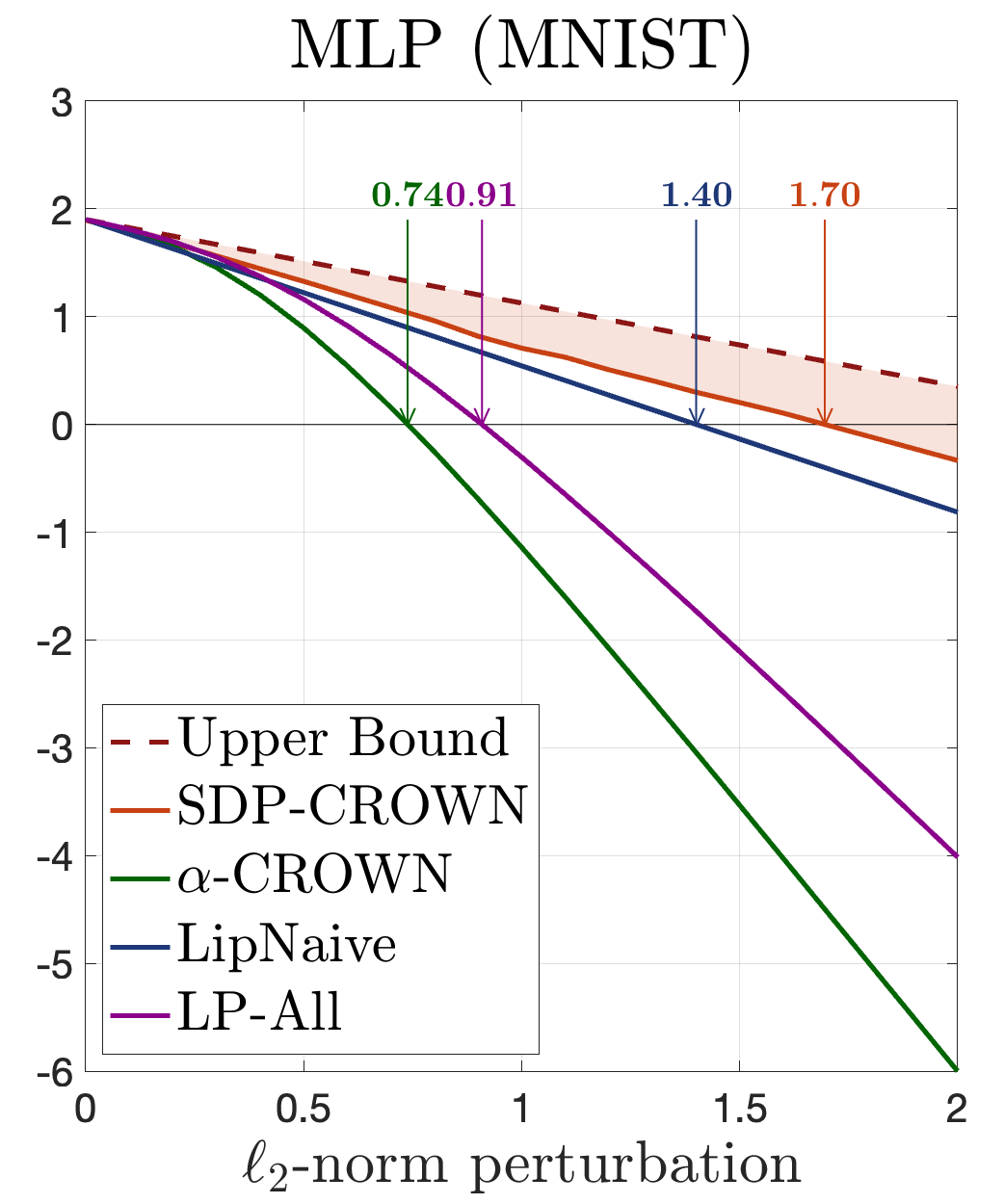}
    \end{subfigure}%
    \begin{subfigure}{0.333\textwidth}
      \centering
      \includegraphics[width=\linewidth]{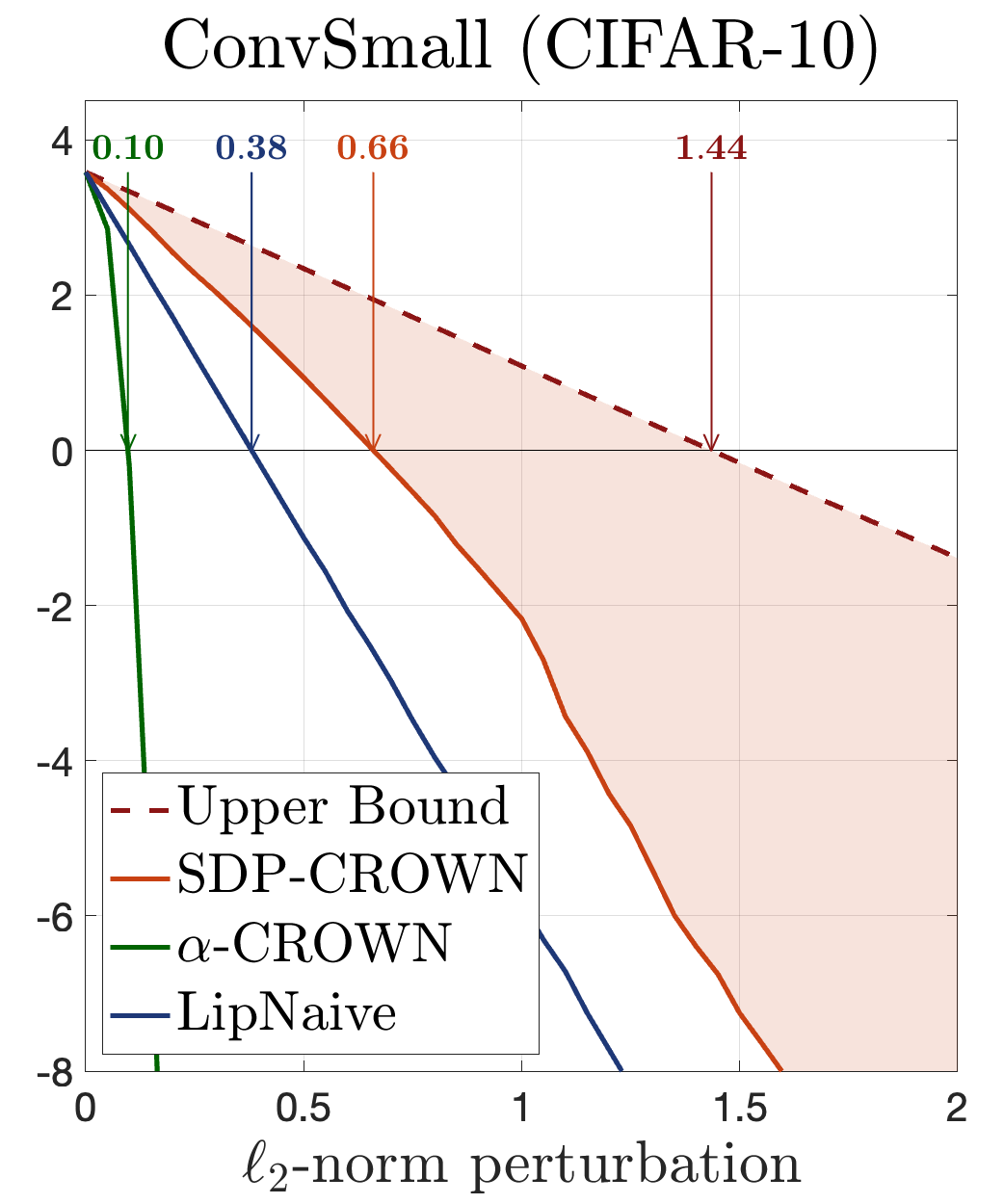}
    \end{subfigure}%
    \begin{subfigure}{0.333\textwidth}
      \centering
      \includegraphics[width=\linewidth]{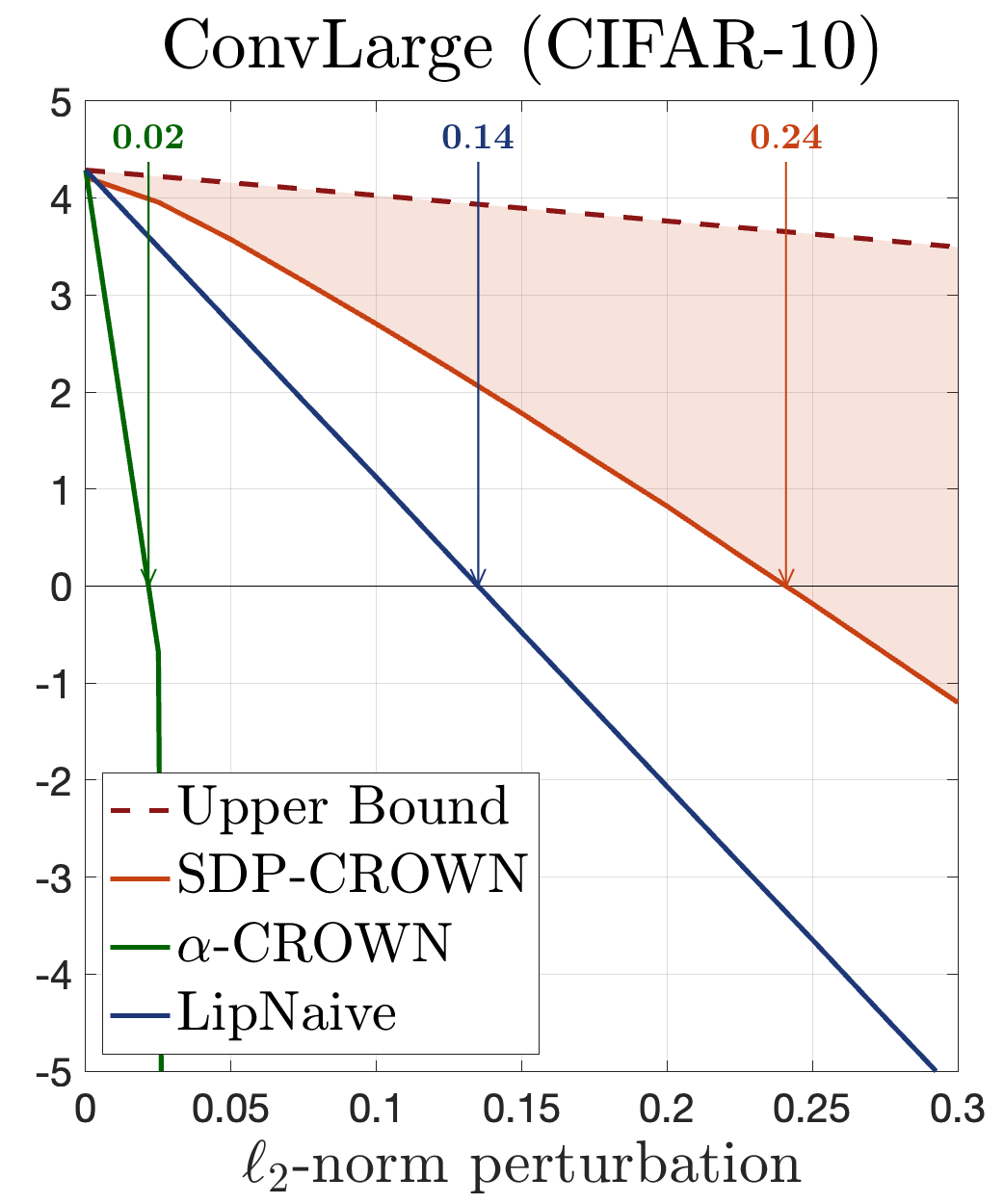}
    \end{subfigure}
    \caption{\textbf{Lower bounds on the robustness margin under $\ell_2$-norm perturbations}. We compare the lower bounds on (\ref{eq:attack}) computed from SDP-CROWN, $\alpha$-CROWN, LipNaive and LP-All. The lower bounds are averages over 90 instances of (\ref{eq:attack}). The upper bound on (\ref{eq:attack}) is estimated projected gradient descent (PGD). The numbers in the figure indicate the $\ell_2$-norm perturbation level at which each lower bound crosses zero. Note that robustness verification is only meaningful in the interval where the PGD upper bound remains positive. \textbf{(Left.)} Small-scale model MLP (MNIST). \textbf{(Middle.)} Medium-scale model ConvSmall (CIFAR-10). \textbf{(Right.)} Large-scale model ConvLarge (CIFAR-10).}\label{fig:tightness}
\end{figure*}

\vspace{-0.2em}
\paragraph{Setups.}
All the experiments are run on a machine with a single Tesla V100-SXM2 GPU (32GB GPU memory) and dual Intel Xeon Gold 6138 CPUs.

\vspace{-0.2em}
\paragraph{Models.}
All the model architectures used in our experiment are taken from \citet{wang2021beta} and \citet{leino21gloro}. To ensure non-vacuous $\ell_2$-norm robustness and make meaningful comparisons across different verification methods, we retrain all models to have a small global Lipschitz upper bound while keeping their model architecture unchanged. We defer the detailed model architecture and the training procedure to the Appendix~\ref{app:experiment}.

\subsection{Robustness verification for neural networks}\label{sec:verified accuracy}
We compare the verified accuracy of SDP-CROWN against state-of-the-art verifiers on models trained on the MNIST and CIFAR-10 datasets. In each case, we fix the $\ell_2$-norm perturbation $\rho$ and compute verified accuracy using the first 200 images in the test set. Here, the verified accuracy denotes the percentage of inputs that are both correctly classified and robust. For comparison, we also compute the upper bound on verified accuracy using projected gradient descent (PGD) attacks \cite{madry2017towards}.

Table~\ref{tab:verified_accuracy} shows the verified accuracy and the average computation time for SDP-CROWN, GCP-CROWN, BICCOS, $\beta$-CROWN, $\alpha$-CROWN, LipNaive, LipSDP, LP-All and BM-Full. While BM-Full achieves the best verified accuracy in the first case, it unfortunately becomes computationally prohibitive in all remaining cases as its complexity scales cubically with respect to the number of preactivations. In all the remaining cases, SDP-CROWN achieves verified accuracy close to the PGD upper bound while the bound propagation method $\alpha$-CROWN exhibits limited certification performance. Other bound propagation methods $\beta$-CROWN, GCP-CROWN and BICCOS can greatly improve over $\alpha$-CROWN, but the gap is still large compared to SDP-CROWN, LipNaive and LipSDP. 

Notably, SDP-CROWN is consistently tighter than LipNaive and LipSDP, suggesting that verifying robustness solely through the Lipschitz constant can be overly conservative, even for networks trained to have a small Lipschitz constant. For LipSDP, networks are divided into subnetworks when evaluating the Lipschitz constant: split=1 indicates single-layer subnetworks, split=2 denotes two-layer subnetworks, and no split means the full network is evaluated directly. We ignore reporting split=1 for LipSDP as it only marginally improves over LipNaive. 

Finally, LP-All is not scalable to large models and is also noticeably weaker than us on certified accuracy.

\subsection{Tightness of lower bounds and verified accuracy}
As the neural network verification problem (\ref{eq:attack}) is NP-hard, all methods based on finding a lower bound on (\ref{eq:attack}) via any sort of convex relaxations must become loose for sufficiently large $\ell_2$ perturbations. However, robustness verification is only necessary in the interval where the upper bound of (\ref{eq:attack}) is positive, which can be efficiently estimated via PGD. Therefore, it is crucial for the lower bound to be tight within this region.
 
In this experiment, we examine the gap between the lower bound computed from SDP-CROWN and the upper bound computed from PGD across a wide range of $\ell_2$ perturbations. To ensure an accurate evaluation, we compute the average lower bounds over 90 instances of (\ref{eq:attack}), which are generated via 9 incorrect classes for the first 10 correctly classified test images. We compare our average lower bound to $\alpha$-CROWN, LipNaive and LP-All, which are verifiers also based on solving convex relaxations of (\ref{eq:attack}). Figure~\ref{fig:tightness} reports the average lower bounds and the average PGD upper bound with respect to three models of different scales: a small-scale model MLP (MNIST); a medium-scale model ConvSmall (CIFAR-10); and a large-scale model ConvLarge (CIFAR-10).

As shown in Figure~\ref{fig:tightness}, our proposed SDP-CROWN consistently outperforms $\alpha$-CROWN, LipNaive and LP-All, and significantly narrows the gap between the PGD upper bound. Notice that $\alpha$-CROWN produces extremely loose lower bounds in almost all cases, especially for large networks such as ConvLarge (CIFAR-10), where its lower bound drops rapidly. LP-All marginally improves over $\alpha$-CROWN. We note that LP-All is excluded in ConvSmall (CIFAR-10) and ConvLarge (CIFAR-10) due to its high computational cost. LipNaive provides tighter lower bounds than $\alpha$-CROWN and LP-All as all three models are trained to have small global Lipschitz upper bounds, but is consistently looser than SDP-CROWN. 

\begin{figure}[t!]
    \vspace{1em}
    \centering
    \includegraphics[width=\linewidth]{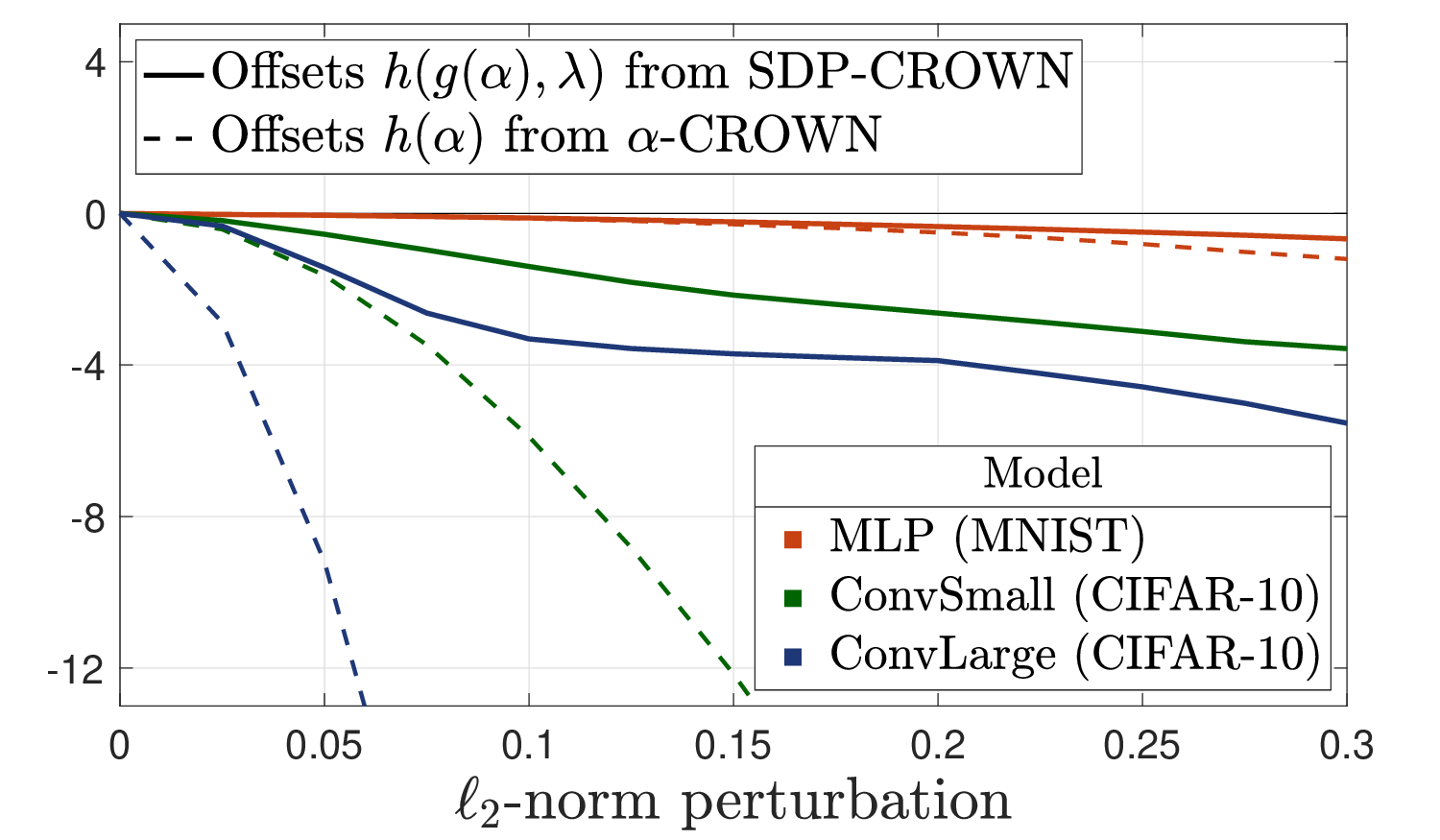}
    \caption{\textbf{Linear relaxation offsets under $\ell_2$-norm perturbations.} We compare the offset $h(\alpha)$ from $\alpha$-CROWN to the offset $h(g(\alpha),\lambda)$ from SDP-CROWN. The offsets are averages over 90 instances of (\ref{eq:attack}). \textbf{(Red line.)} Small-scale model MLP (MNIST). \textbf{(Green line.)} Medium-scale model ConvSmall (CIFAR-10). \textbf{(Blue line.)} Large-scale model ConvLarge (CIFAR-10).
    }
    \label{fig:offsets}
\end{figure}

\subsection{Tightening bound propagation}\label{sec:empirical_tightness}
In Section~\ref{sec:tightness}, we prove that when $f(x)\equiv\relu(x)$ and the center of the $\ell_2$ norm perturbation is zero, the offset $h(g(\alpha),\lambda)$ computed from SDP-CROWN is guaranteed to be at least a factor of $\sqrt{n}$ tighter than the offset $h(\alpha)$ from bound propagation methods. However, the amount of improvement cannot be analyzed analytically under general settings. To empirically demonstrate how much improvement SDP-CROWN can achieve, in this experiment, we compute the average offset $h(g(\alpha),\lambda)$ from SDP-CROWN, and the average offset $h(\alpha)$ from $\alpha$-CROWN. Specifically, the average offset of SDP-CROWN is taken over $h^{(k)}(g(\alpha^{(k)}),\lambda^{(k)})$ in (\ref{eq:sdp_crown_offset}) for $k=1,\ldots,N$, as in $\frac{1}{N}\sum_{k=1}^N h^{(k)}(g(\alpha^{(k)}),\lambda^{(k)})$, and the average offset of  $\alpha$-CROWN is taken over $h^{(k)}(\alpha^{(k)})$ in (\ref{eq:lirpa_offset})for $k=1,\ldots,N$, as in $\frac{1}{N}\sum_{k=1}^N h^{(k)}(\alpha^{(k)})$. To ensure an accurate evaluation, we compute the average offsets over 90 instances of (\ref{eq:attack}), which are generated via 9 incorrect classes for the first 10 correctly classified test images. Figure~\ref{fig:offsets} reports the average offsets with respect to three models of different scales: a small-scale model MLP (MNIST); a medium-scale model ConvSmall (CIFAR-10); and a large-scale model ConvLarge (CIFAR-10). 

As shown in Figure~\ref{fig:offsets}, SDP-CROWN consistently yields a tighter offset compared to $\alpha$-CROWN under general settings, which demonstrates its effectiveness in tightening bound propagation and improves certification quality. Notably, $\alpha$-CROWN exhibits a significant drop in $h(\alpha)$ for larger networks such as ConvLarge (CIFAR-10) and ConvSmall (CIFAR-10) as the $\ell_2$ perturbation increases. In contrast, SDP-CROWN does not experience a rapid reduction in $h(g(\alpha),\lambda)$, maintaining significantly larger offsets across all models and perturbation sizes.

\section{Conclusion}
In this work, we present SDP-CROWN, a novel framework that significantly tightens bound propagation for neural network verification under $\ell_2$-norm perturbations. SDP-CROWN leverages semidefinite programming relaxations to improve the tightness of bound propagation while retaining the efficiency of bound propagation methods. Theoretically, we prove that SDP-CROWN can be up to $\sqrt{n}$ tighter than bound propagation for a one-neuron network under zero-centered $\ell_2$-norm perturbations. Practically, our extensive experiments demonstrate that SDP-CROWN consistently outperforms state-of-the-art verifiers across a range of models under $\ell_2$-norm perturbations, including models with over 2 million parameters and 65,000 neurons, where traditional LP and SDP methods are computationally infeasible, and bound propagation methods yield notably loose relaxations. Our results establish SDP-CROWN as both a theoretical and practical advancement in scalable neural network verification under $\ell_2$-norm perturbations.

\section*{Acknowledgments}
Financial support for this work was provided by NSF CAREER Award ECCS-2047462, IIS-2331967, and ONR Award N00014-24-1-2671. Huan Zhang is supported in part by the AI2050 program at Schmidt Sciences (AI2050 Early Career Fellowship).

\section*{Impact Statement}
The work presented in this paper aims to advance neural network verification in machine learning. There are many potential societal consequences of our work, none of which we feel must be specifically highlighted here.

\bibliography{robustness}
\bibliographystyle{icml2025}

\newpage
\appendix
\onecolumn
\section{Experimental Setup}\label{app:experiment}
\paragraph{Hyperparameter settings.}
In our method SDP-CROWN, the variables $\alpha$ and $\lambda$ are solved by the Adam optimizer with 300 iterations. The learning rate is set to 0.5 and 0.05 for $\alpha$ and $\lambda$, respectively, and is decayed with a factor of 0.98 per iteration. For $\alpha$-CROWN, the variable $\alpha$ is solved by the Adam optimizer for 300 iterations, with the learning rate set to 0.5 and the decay factor set to 0.98. For $\beta$-CROWN, GCP-CROWN, and BICCOS, the timeout threshold for branch and bound is set to 300 seconds.

\paragraph{Model architecture.}
All the model architectures are taken from \citet{wang2021beta} and \citet{leino21gloro}.
\begin{table}[!htbp]
\caption{Model architectures used in our experiments.}
\label{tab:model_structures}
\centering
\aboverulesep=0ex
\belowrulesep=0ex
\resizebox{\textwidth}{!}{ 
\begin{threeparttable}
\begin{tabularx}{\textwidth}{l|X|l|l|l}
\toprule\rule{0pt}{1.1EM}
\textbf{Model name} & \textbf{Model structure} &\textbf{Parameters} &\textbf{Neurons}&\textbf{Accuracy}\\
\vspace{-1em}&&&&\\\midrule\rule{0pt}{1.1EM}
MLP (MNIST)      & Linear(784, 100) - Linear(100, 100) - Linear(100, 10) & 89,610& 994&79\%\\
\vspace{-1em}&&&&\\\midrule\rule{0pt}{1.1EM}
ConvSmall (MNIST)      & Conv(1, 16, 4, 2, 1) - Conv(16, 32, 4, 2, 1) - Linear(1568, 100) - Linear(100, 10) & 166406& 5598&87.5\%\\
\vspace{-1em}&&&&\\\midrule\rule{0pt}{1.1EM}
ConvLarge (MNIST)        & Conv(1, 32, 3, 1, 1) - Conv(32, 32, 4, 2, 1) - Conv(32, 64, 3, 1, 1) - Conv(64, 64, 4, 2, 1) - Linear(3136, 512) - Linear(512, 512) - Linear(512, 10) & 1976162& 48858&89.5\%\\
\vspace{-1em}&&&&\\\midrule\rule{0pt}{1.1EM}
CNN-A (CIFAR-10) & Conv(3, 16, 4, 2, 1) - Conv(16, 32, 4, 2, 1) - Linear(2048, 100) - Linear(100, 10) & 214918&9326& 62\%\\
\vspace{-1em}&&&&\\\midrule\rule{0pt}{1.1EM}
CNN-B (CIFAR-10) & Conv(3, 32, 5, 2, 0) - Conv(32, 128, 4, 2, 1) - Linear(8192, 250) - Linear(250, 10) & 2118856& 15876&66\%\\
\vspace{-1em}&&&&\\\midrule\rule{0pt}{1.1EM}
CNN-C (CIFAR-10) & Conv(3, 8, 4, 2, 0) - Conv(8, 16, 4, 2, 0) - Linear(576, 128) - Linear(128, 64) - Linear(64, 10) & 85218& 5650&51\%\\
\vspace{-1em}&&&&\\\midrule\rule{0pt}{1.1EM}
ConvSmall (CIFAR-10) & Conv(3, 16, 4, 2, 0) - Conv(16, 32, 4, 2, 0) - Linear(1152, 100) - Linear(100, 10) & 125318& 7934&60.5\%\\
\vspace{-1em}&&&&\\\midrule\rule{0pt}{1.1EM}
ConvDeep (CIFAR-10) & Conv(3, 8, 4, 2, 1) - Conv(8, 8, 3, 1, 1) - Conv(8, 8, 3, 1, 1) - Conv(8, 8, 4, 2, 1) - Linear(512, 100) - Linear(100, 10) & 54902& 9838&53\%\\
\vspace{-1em}&&&&\\\midrule\rule{0pt}{1.1EM}
ConvLarge (CIFAR-10)     & Conv(3, 32, 3, 1, 1) - Conv(32, 32, 4, 2, 1) - Conv(32, 64, 3, 1, 1) - Conv(64, 64, 4, 2, 1) - Linear(4096, 512) - Linear(512, 512) - Linear(512, 10) & 2466858& 65546&74\%\\
\bottomrule
\end{tabularx}
\begin{tablenotes}
\item[] \textit{Note:} Conv(3, 16, 4, 2, 0) stands for a convolutional layer with 3 input channels, 16 output channels, a $4 \times 4$ kernel, stride 2 and padding 0. Linear(1568, 100) represents a fully connected layer with 1568 input features and 100 output features. There are no max pooling / average pooling used in these models and ReLU activations are applied between consecutive layers.
\end{tablenotes}
\end{threeparttable}
}
\end{table}

\paragraph{Training procedure.}
We retrained all the models used in our experiment to make them 1-lipschitz. The training strategy involves two phases: First, we train the model using the standard cross-entropy (CE) loss. Then, we retrain the model from scratch using a combination of KL-divergence and the spectral norm of the new model as the loss. During this phase, the outputs of the initially trained model are used as labels.

\newpage
\section{Details on integrating our method into bound propagation}\label{app:bound_propagation}
Bound propagation is an efficient framework for finding valid linear lower bounds for $c^Tf(x)$ within some input set $x\in\XX$. In this section, we first give a brief overview of performing bound propagation using the elementwise bound on preactivations. We then demonstrate how our results can be integrated into this framework to yield SDP-CROWN, an extension capable of performing bound propagation using the  $\ell_2$-norm constraint on the preactivations.

We include a concrete example in Section~\ref{sec:example} to illustrate the details of both bound propagation and SDP-CROWN.

\subsection{LiRPA: bound propagation using the elementwise bound on preactivations}
To better concept of bound propagation, we express each $z^{(k)}$ as a function of the input $x$, and redefine the neural network $f(x)$ in (\ref{eq:model}) as
\[
f(x) = z^{(N)}(x),\quad z^{(k+1)}(x)=W^{(k+1)}\relu(z^{(k)}(x)),\quad z^{(1)}(x) = W^{(1)}x \quad\mbox{for $k=1\ldots,N-1$}.
\]
The bound propagation we describe in this section is the backward mode of Linear Relaxation based Perturbation Analysis (LiRPA) in \cite{xu2020automatic} that efficiently constructs linear lower bounds for $c^Tf(x)$ within an elementwise bound relaxation of the input set $\XX$:
\begin{equation}\label{eq:lirpa}
g(\alpha)^Tx+h(\alpha)\leq c^Tf(x) \ \ \forall\  x\in\BB_\infty(\tilde x,\tilde\rho)\supseteq \XX,
\end{equation}
where $\tilde x, \tilde \rho\in\R^n$ are the radius and center of the elementwise bound.

LiRPA computes (\ref{eq:lirpa}) by backward constructing linear lower bounds with respect to each preactivation $z^{(k)}(x)$
\begin{equation}\label{eq:lirpa_bound_on_z_k}
[g^{(k)}(\alpha^{(k)})]^Tz^{(k)}(x) + h^{(k)}(\alpha^{(k)}) \leq c^Tf(x)\ \ \forall\ x\in\BB_\infty(\tilde x,\tilde\rho),
\end{equation}
i.e., starting from $k=N$ all the way down to $k=1$. Here, $0\leq\alpha^{(k)}\leq 1$ are variables of the same length as $z^{(k)}(x)$, which are defined in (\ref{eq:relu_bound}). 

For $k=N$. The linear lower bound with respect to $z^{(k)}$ is simply
\[
c^Tz^{(k)}(x) \leq c^Tf(x),
\]
and therefore, $g^{(k)}(\alpha^{(k)})=c$ and $h^{(k)}(\alpha^{(k)})=0$.

For $k = N-1,\dots,1$. LiRPA takes linear lower bounds with respect to $z^{(k+1)}(x)$ ((\ref{eq:lirpa_bound_on_z_k}) at $k+1$) to construct linear bounds with respect to $z^{(k)}(x)$ ((\ref{eq:lirpa_bound_on_z_k}) at $k$). The first step is to substitute $z^{(k+1)}(x)=W^{(k+1)}\relu(z^{(k)}(x))$ to yield
\[
[c^{(k)}]^T\relu(z^{(k)}(x)) + d^{(k)}\leq c^Tf(x)
\]
where $c^{(k)}=[W^{(k+1)}]^Tg^{(k+1)}(\alpha^{(k+1)})$ and $d^{(k)}=h^{(k+1)}(\alpha^{(k+1)})$. Since $\relu(z^{(k)}(x))$ is nonlinear, LiRPA performs linear relaxation on each $\relu(z^{(k)}_i(x))$ to propagate linear bounds from $\relu(z^{(k)}(x))$ to $z^{(k)}(x)$. In particular, LiRPA constructs linear bounds $\alpha^{(k)}_i z^{(k)}_i(x)\leq \relu(z^{(k)}_i(x))\leq \beta^{(k)}_i z^{(k)}_i(x) +\gamma^{(k)}_i$ within $|z^{(k)}_i(x)-\tilde z^{(k)}_i|\leq\tilde\rho^{(k)}_i$ where
\begin{equation}\label{eq:relu_bound}
\begin{aligned}
\begin{cases}
	\alpha^{(k)}_i=1,\quad \beta^{(k)}_i=1,\quad \gamma^{(k)}_i=0 & \text{if $\tilde z^{(k)}_i-\tilde\rho^{(k)}_i \geq 0$}\\
	\alpha^{(k)}_i=0,\quad \beta^{(k)}_i=0,\quad \gamma^{(k)}_i=0 & \text{if $\tilde z^{(k)}_i+\tilde\rho^{(k)}_i \leq 0$}\\
	\alpha^{(k)}_i=\tilde\alpha^{(k)}_i,\quad \beta^{(k)}_i=\frac{\tilde z^{(k)}_i+\tilde\rho^{(k)}_i}{2\tilde\rho^{(k)}_i},\quad \gamma^{(k)}_i=-\frac{(\tilde z^{(k)}_i+\tilde\rho^{(k)}_i)(\tilde z^{(k)}_i-\tilde\rho^{(k)}_i)}{2\tilde\rho^{(k)}_i} & \text{otherwise},\\
\end{cases}
	\end{aligned}
\end{equation}
and $0\leq\tilde\alpha^{(k)}_i\leq 1$ is a free variable that can be optimized (see Figure~\ref{fig:triangle} for illustration). Here, the elementwise bound on each preactivation $|z^{(k)}_i(x)-\tilde z^{(k)}_i|\leq\tilde\rho^{(k)}_i$ can be computed by treating $z^{(k)}_i(x)$ as the output of LiRPA, i.e., $c\equiv e_i$ and $f(x)\equiv z^{(k)}(x)$. 
\begin{figure}[h!]
    \centering
    \includegraphics[width=0.33\textwidth]{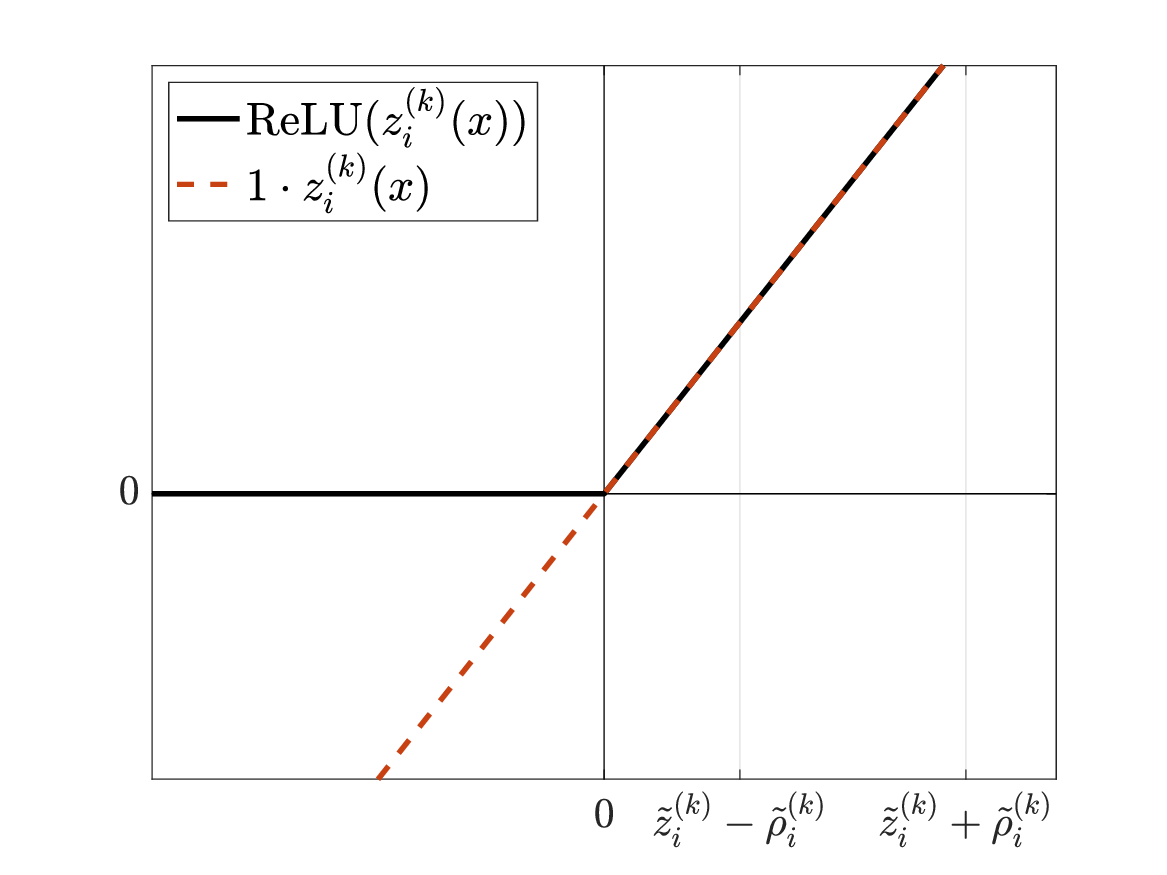}%
    \includegraphics[width=0.33\textwidth]{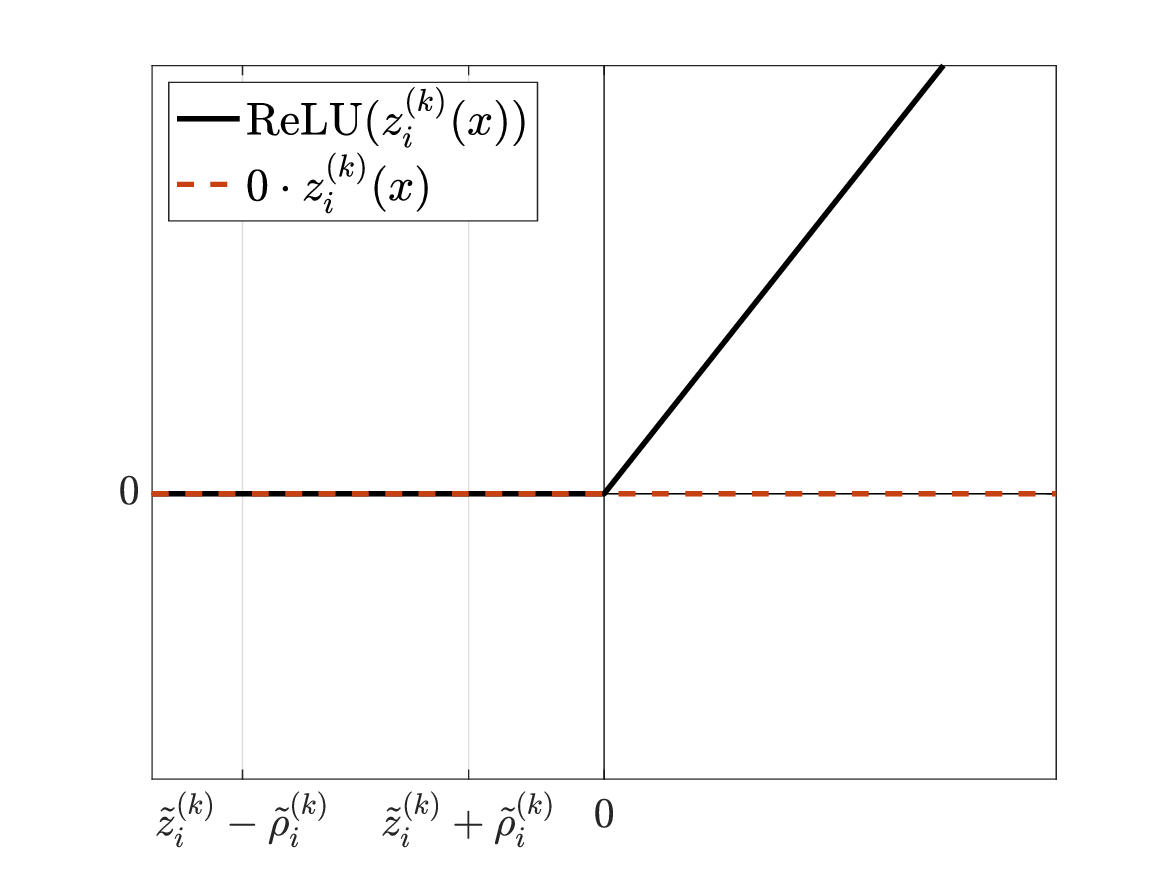}%
    \includegraphics[width=0.33\textwidth]{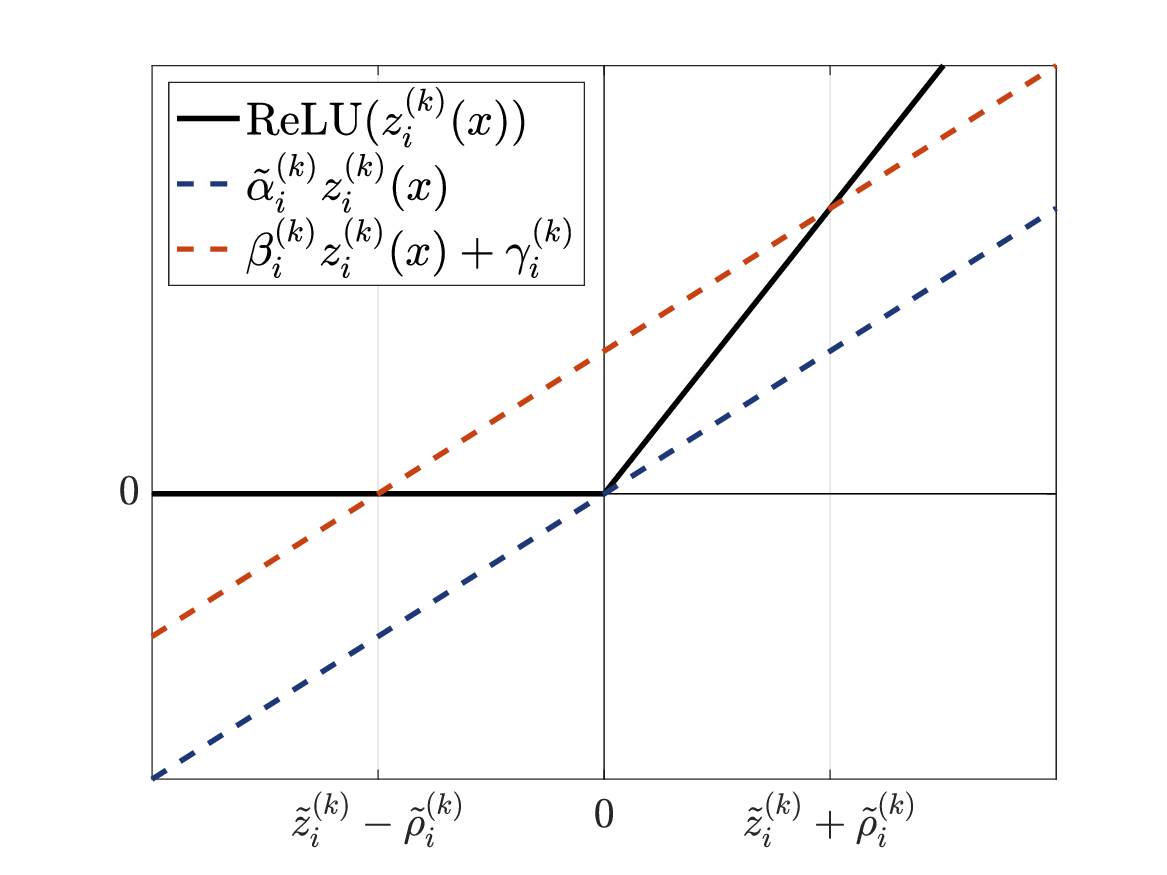}%
    \caption{\textbf{Illustration of the linear relaxation (\ref{eq:relu_bound})}. \textbf{(Left.)} $\tilde z^{(k)}_i-\tilde\rho^{(k)}_i \geq 0$. In this case, $\relu(z^{(k)}_i(x))$ is simply upper and lower bounded by $z^{(k)}_i(x)$. \textbf{(Middle.)} $\tilde z^{(k)}_i+\tilde\rho^{(k)}_i \leq 0$. In this case, $\relu(z^{(k)}_i(x))$ is simply upper and lower bounded by $0$. \textbf{(Right.)} $\tilde z^{(k)}_i-\tilde\rho^{(k)}_i\leq 0\leq \tilde z^{(k)}_i+\tilde\rho^{(k)}_i$. In this case, $\relu(z^{(k)}_i(x))$ is lower bounded by $\tilde\alpha^{(k)}_i z^{(k)}_i(x)$ for any $0\leq\tilde\alpha^{(k)}_i\leq 1$ and upper bounded by a linear function intersects $(\tilde z^{(k)}_i-\tilde\rho^{(k)}_i,0)$ and $(\tilde z^{(k)}_i+\tilde\rho^{(k)}_i,\tilde z^{(k)}_i+\tilde\rho^{(k)}_i)$. 
    }\label{fig:triangle}
\end{figure}

Using (\ref{eq:relu_bound}), the linear lower bounds with respect to $z^{(k)}$ are given by
\begin{equation}\label{eq:lirpa_offset}
[\underbrace{c^{(k)}_+\odot\alpha^{(k)} + c^{(k)}_-\odot\beta^{(k)}}_{g^{(k)}(\alpha^{(k)})}]^Tz^{(k)}(x) + \underbrace{[c^{(k)}_-]^T\gamma^{(k)} + d^{(k)}}_{h^{(k)}(\alpha^{(k)})}\leq [c^{(k)}]^T\relu(z^{(k)}(x)) + d^{(k)} \leq c^Tf(x)
\end{equation}
where $c^{(k)}_+ = \max\{c^{(k)},0\}$ and $c^{(k)}_- = \min\{c^{(k)},0\}$. 

Finally, setting $g(\alpha)=[W^{(1)}]^Tg^{(1)}(\alpha^{(1)})$  and $h(\alpha)=h^{(1)}(\alpha^{(1)})$ yields the desired linear lower bound in (\ref{eq:lirpa}).

\subsection{SDP-CROWN: bound propagation using the $\ell_2$-norm constraint on preactivations}
SDP-CROWN efficiently constructs linear lower bounds for $c^Tf(x)$ within an $\ell_2$-norm ball relaxation of the input set $\XX$:
\begin{equation}\label{eq:lirpa_sdp}
g(\alpha)^Tx+h(g(\alpha),\lambda)\leq c^Tf(x) \ \ \forall\  x\in\BB_2(\hat x,\rho)\supseteq \XX,
\end{equation}
where $\hat x\in\R^n$ and $\rho\in\R$ are the center and radius of the $\ell_2$-norm ball. To extend LiRPA to compute (\ref{eq:lirpa_sdp}), we simply set $\tilde x = \hat x$, $\tilde\rho = \rho\1$ in (\ref{eq:lirpa}) and follow the same process of LiRPA, except with the offsets $h^{(k)}(\alpha^{(k)})$ in (\ref{eq:lirpa_offset}) replaced by
\[
h^{(k)}(g^{(k)}(\alpha^{(k)}),\lambda^{(k)})=-\frac{1}{2}\cdot\left(\lambda^{(k)}\left((\rho^{(k)})^2-\|\hat z^{(k)}\|_2^2\right)+\frac{1}{\lambda^{(k)}}\|\phi^{(k)}(g^{(k)}(\alpha^{(k)}),\lambda^{(k)})\| _2^2\right) + d^{(k)}
\]
where $\lambda^{(k)}\geq 0$ is a free variable that can be optimized and 
\[
\phi_i^{(k)}(g^{(k)}(\alpha^{(k)}),\lambda^{(k)}) = \min\{c^{(k)}_i-g^{(k)}_i(\alpha^{(k)})-\lambda^{(k)}\hat z^{(k)}_i,\ g^{(k)}_i(\alpha^{(k)})+\lambda^{(k)}\hat z^{(k)}_i,\ 0\}\quad\mbox{for all $\ i$}.
\] 
Here $\hat z^{(k)}$ and $\rho^{(k)}$ are the center and radius of the $\ell_2$ norm ball for $z^{(k)}(x)$, i.e., $\|z^{(k)}(x)-\hat z^{(k)}\|_2\leq \rho^{(k)}$. The $\ell_2$ norm ball for $z^{(k)}(x)$ can be computed via the spectral norm of the weight matrices, as in $\hat z^{(k)} = W^{(k)}W^{(k-1)}\cdots W^{(1)}\hat x$ and $\rho^{(k)} = \|W^{(k)}\|_2\|W^{(k-1)}\|_2\cdots \|W^{(1)}\|_2\rho$, or by more sophisticated methods such as \cite{fazlyab2019efficient}. Here, we use $\|W^{(k)}\|_2$ to denote the spectral $\ell_2$-norm of the matrix $W^{(k)}$. We note that by Theorem~\ref{thm:sdp_crown_lb}, we always have
\begin{equation}\label{eq:sdp_crown_offset}
[g^{(k)}(\alpha^{(k)})]^Tz^{(k)}(x) + h^{(k)}(g^{(k)}(\alpha^{(k)}),\lambda^{(k)}) \leq c^Tf(x)\ \ \forall\ x\in\BB_2(\hat x,\rho)
\end{equation}
for all $k=N,\ldots,1$. 

Finally, setting $g(\alpha)=[W^{(1)}]^Tg^{(1)}(\alpha^{(1)})$ and $h(g(\alpha),\lambda)=h^{(1)}(g^{(1)}(\alpha^{(1)}),\lambda^{(1)})$ yields the desired linear lower bounds in (\ref{eq:lirpa_sdp}).

\subsection{A small example of LiRPA and SDP-CROWN}\label{sec:example}
We give a step-by-step illustration of how to find a linear lower bound on $c^Tf(x)$ within $\|x-\hat x\|_2\leq \rho$. Here, we consider a 3-layer neural network $f(x)$ with 
\[
c = 1,\quad W^{(3)}=\begin{bmatrix}-1&-1\end{bmatrix},\quad W^{(2)}=\begin{bmatrix}-1&1\\1&-1\end{bmatrix},\quad W^{(1)}=\begin{bmatrix}0&1\\1&0\end{bmatrix},\quad \hat x = \begin{bmatrix}1\\1\end{bmatrix},\quad \rho=1.
\]

\paragraph{LiRPA.} For simplicity, we compute each intermediate bound $|z^{(k)}(x)-
\tilde z^{(k)}|\leq \tilde\rho^{(k)}$ via interval bound propagation \cite{gowal2019scalable}, and always pick $\tilde\alpha^{(k)}=0$ in (\ref{eq:relu_bound}) for $k=1,2$. In this particular example, the choice of $\tilde{\alpha}^{(k)}$ does not affect the final result. The intermediate bounds are given by
\[
\tilde z^{(1)} = \begin{bmatrix}1\\1\end{bmatrix},\quad \tilde \rho^{(1)} = \begin{bmatrix}1\\1\end{bmatrix},\quad \tilde z^{(2)} = \begin{bmatrix}0\\0\end{bmatrix},\quad \tilde \rho^{(2)} = \begin{bmatrix}2\\2\end{bmatrix}.
\]

\begin{itemize}
\item Starting at $k=3$, we simply have
\[
g^{(3)}(\alpha^{(3)}) = 1,\quad h^{(3)}(\alpha^{(3)}) = 0.
\]
\item At $k=2$, substituting $z^{(3)}(x) = W^{(3)}\relu(z^{(2)}(x))$ and constructing $\alpha^{(2)}_i z^{(2)}_i(x)\leq \relu(z^{(2)}_i(x))\leq \beta^{(2)}_i z^{(2)}_i(x) +\gamma^{(2)}_i$ via $|z^{(2)}(x)-\tilde z^{(2)}|\leq \tilde\rho^{(2)}$ gives
\[
c^{(2)} = [W^{(3)}]^Tg^{(3)}(\alpha^{(3)})=\begin{bmatrix}-1\\-1\end{bmatrix},\quad d^{(2)} = h^{(3)}(\alpha^{(3)}) = 0,\quad \alpha^{(2)}=\begin{bmatrix}0\\0\end{bmatrix},\quad \beta^{(2)}=\begin{bmatrix}0.5\\0.5\end{bmatrix},\quad \gamma^{(2)}=\begin{bmatrix}1\\1\end{bmatrix}.
\]
Therefore, we have 
\[
g^{(2)}(\alpha^{(2)}) = c^{(2)}_+\odot\alpha^{(2)} + c^{(2)}_-\odot\beta^{(2)} = \begin{bmatrix}-0.5\\-0.5\end{bmatrix},\quad h^{(2)}(\alpha^{(2)}) = [c^{(2)}_-]^T\gamma^{(2)} + d^{(2)} = -2.
\]
\item At $k=1$, substituting $z^{(2)}(x) = W^{(2)}\relu(z^{(1)}(x))$ and constructing $\alpha^{(1)}_i z^{(1)}_i(x)\leq \relu(z^{(1)}_i(x))\leq \beta^{(1)}_i z^{(1)}_i(x) +\gamma^{(1)}_i$ via $|z^{(1)}(x)-\tilde z^{(1)}|\leq \tilde\rho^{(1)}$ gives
\[
c^{(1)} = [W^{(2)}]^Tg^{(2)}(\alpha^{(2)})=\begin{bmatrix}0\\0\end{bmatrix},\quad d^{(1)} = h^{(2)}(\alpha^{(2)}) = -2,\quad \alpha^{(1)}=\begin{bmatrix}1\\1\end{bmatrix},\quad \beta^{(1)}=\begin{bmatrix}1\\1\end{bmatrix},\quad \gamma^{(1)}=\begin{bmatrix}0\\0\end{bmatrix}.
\]
Therefore, we have  
\[
g^{(1)}(\alpha^{(1)}) = c^{(1)}_+\odot\alpha^{(1)} + c^{(1)}_-\odot\beta^{(1)} = \begin{bmatrix}0\\0\end{bmatrix},\quad h^{(1)}(\alpha^{(1)}) = [c^{(1)}_-]^T\gamma^{(1)} + d^{(1)} = -2.
\]
\end{itemize}

As a result, from LiRPA, we conclude 
\[
-2 = \begin{bmatrix}0\\0\end{bmatrix}^Tx + -2 \leq 1\cdot f(x)\ \ \forall\ \left\Vert x-
\begin{bmatrix}1\\1\end{bmatrix}	\right\Vert_2\leq 1.
\]

\paragraph{SDP-CROWN.} For simplicity, we compute each intermediate bound $\|z^{(k)}(x)-
\hat z^{(k)}\|_2\leq \rho^{(k)}$ for $k=1,2$ via the Lipschitz constant of $W^{(k)}$, which is given by
\[
\hat z^{(1)} = \begin{bmatrix}1\\1\end{bmatrix},\quad \rho^{(1)} = 1,\quad \hat z^{(2)} = \begin{bmatrix}0\\0\end{bmatrix},\quad \rho^{(2)} = 2.
\]
\begin{itemize}
\item Starting at $k=3$, we simply have
\[
g^{(3)}(\alpha^{(3)}) = 1,\quad h^{(3)}(g^{(3)}(\alpha^{(3)}),\lambda^{(3)}) = 0.
\]
\item At $k=2$, taking $c^{(2)}$, $g^{(2)}(\alpha^{(2)})$ from LiRPA, setting $d^{(2)} = h^{(3)}(g^{(3)}(\alpha^{(3)}),\lambda^{(3)}) = 0$, and substituting them into 
\begin{align*}
h^{(2)}(g^{(2)}(\alpha^{(2)}),\lambda^{(2)})&=-\frac{1}{2}\cdot\left(\lambda^{(2)}\left((\rho^{(2)})^2-\|\hat z^{(2)}\|_2^2\right)+\frac{1}{\lambda^{(2)}}\|\phi^{(2)}(g^{(2)}(\alpha^{(2)}),\lambda^{(2)})\| _2^2\right) + d^{(2)}\\
&=-\frac{1}{2}\cdot\left(4\lambda^{(2)}+\frac{1}{\lambda^{(2)}}\cdot 0.5\right).
\end{align*}
Notice that $\max_{\lambda^{(2)}\geq 0} h^{(2)}(g^{(2)}(\alpha^{(2)}),\lambda^{(2)})$ is maximized at $\lambda^{(2)} = \sqrt{1/8}$, and hence we have 
\[
g^{(2)}(\alpha^{(2)}) = \begin{bmatrix}-0.5\\-0.5\end{bmatrix},\quad h^{(2)}(g^{(2)}(\alpha^{(2)}),\lambda^{(2)}) = -\sqrt{2},\quad \lambda^{(2)} = \sqrt{1/8}.
\]
\item At $k=1$, taking $c^{(1)}$, $g^{(1)}(\alpha^{(1)})$ from LiRPA, setting $d^{(1)} = h^{(2)}(g^{(2)}(\alpha^{(2)}),\lambda^{(2)}) = -\sqrt{2}$, and substituting them into 
\begin{align*}
h^{(1)}(g^{(1)}(\alpha^{(1)}),\lambda^{(1)})&=-\frac{1}{2}\cdot\left(\lambda^{(1)}\left((\rho^{(1)})^2-\|\hat z^{(1)}\|_2^2\right)+\frac{1}{\lambda^{(1)}}\|\phi^{(1)}(g^{(1)}(\alpha^{(1)}),\lambda^{(1)})\| _2^2\right) + d^{(1)}\\
&=-\frac{1}{2}\cdot\left(-\lambda^{(1)}+\frac{1}{\lambda^{(1)}}\cdot \|\min\{-\lambda^{(1)}\hat z^{(1)},\lambda^{(1)}\hat z^{(1)},0 \}\|_2^2\right) -\sqrt{2}\\
&=-\frac{1}{2}\cdot\left(-\lambda^{(1)}+\lambda^{(1)}\cdot \|\min\{-\hat z^{(1)},\hat z^{(1)},0 \}\|_2^2\right) -\sqrt{2}\\
&=-\frac{1}{2}\cdot\left(\lambda^{(1)}\right) -\sqrt{2}.
\end{align*}
Obviously, $\max_{\lambda^{(1)}\geq 0} h^{(1)}(g^{(1)}(\alpha^{(1)}),\lambda^{(1)})$ is maximized at $\lambda^{(1)} = 0$, and hence
\[
g^{(1)}(\alpha^{(1)}) = \begin{bmatrix}0\\0\end{bmatrix},\quad h^{(1)}(g^{(1)}(\alpha^{(1)}),\lambda^{(1)}) = -\sqrt{2},\quad \lambda^{(1)} = 0.
\]
\end{itemize}
As a result, from our method, we conclude 
\[
-\sqrt{2} = \begin{bmatrix}0\\0\end{bmatrix}^Tx + -\sqrt{2} \leq 1\cdot f(x)\ \ \forall\ \left\Vert x-
\begin{bmatrix}1\\1\end{bmatrix}	\right\Vert_2\leq 1.
\]
In this particular example, our method tightens bound propagation by exactly a factor of $\sqrt{2}$.

\section{Some extensions of SDP-CROWN}\label{app:extension}
In this section, we describe several extensions that can further tighten SDP-CROWN. 

\subsection{Ellipsoid constraints}
The tightness of SDP-CROWN hinges crucially on the quality of the $\ell_2$-norm ball relaxation $\BB_2(\hat z^{(k)},\rho^{(k)})\supseteq \{z^{(k)}\mid x\in \BB_2(\hat x,\rho)\}$ for the input set at each $z^{(k)}$ during the computation of linear lower bounds (\ref{eq:sdp_crown_offset}). However, in general settings, $\ell_2$-norm balls might not be the best choice to relax the input set at $z^{(k)}$. For illustration, consider a simple one-layer example with
\[
W^{(1)}=\begin{bmatrix}0.5&0.5\\1.5&-0.5\end{bmatrix},\quad \hat x=\begin{bmatrix}0\\0\end{bmatrix},\quad \rho=1.
\]
As illustrated in Figure~\ref{fig:ellipsoid}, the input set at $z^{(1)}$, $\{z^{(1)}\mid x\in \BB_2(\hat x,\rho)\}$, is a rotated and elongated ellipsoid; therefore, relaxing this input set by naively propagating $\ell_2$-norm ball from $x$ to $z^{(1)}$ can result in extremely loose relaxation. To address this issue, we generalize SDP-CROWN to handle ellipsoids of the following form
\[
\EE_2(\hat x, \hat\rho) = \{x\mid \|\diag(\hat\rho)^{-1}(x-\hat x)\|_2\leq 1\}
\]
where $\hat x,\hat\rho\in\R^n$ are the center and axes of the ellipsoid. 

We note that the ellipsoid can also be efficiently propagated via the Lipschitz constant of $W^{(k)}$. Given $\EE_2(\hat z^{(k)}, \hat\rho^{(k)})$, one simple heuristic is to select the center and axis for $\EE_2(\hat z^{(k+1)}, \hat\rho^{(k+1)})$ as
\[
\hat z^{(k+1)} = W^{(k+1)}\hat z^{(k)},\quad \hat\rho^{(k+1)}=y\cdot\|\diag(y)^{-1}W^{(k+1)}\diag(\hat\rho^{(k)})\|_2 \mbox{ with } y = \|W^{(k+1)}\diag(\hat\rho^{(k)})\|_{r,2}
\]
where $\|W\|_{r,2}=\sqrt{(W\odot W)\1}$ denotes the rowwise $\ell_2$ norm of a matrix $W$.
\begin{figure}[h]
	\centering
	\includegraphics[width=0.33\linewidth]{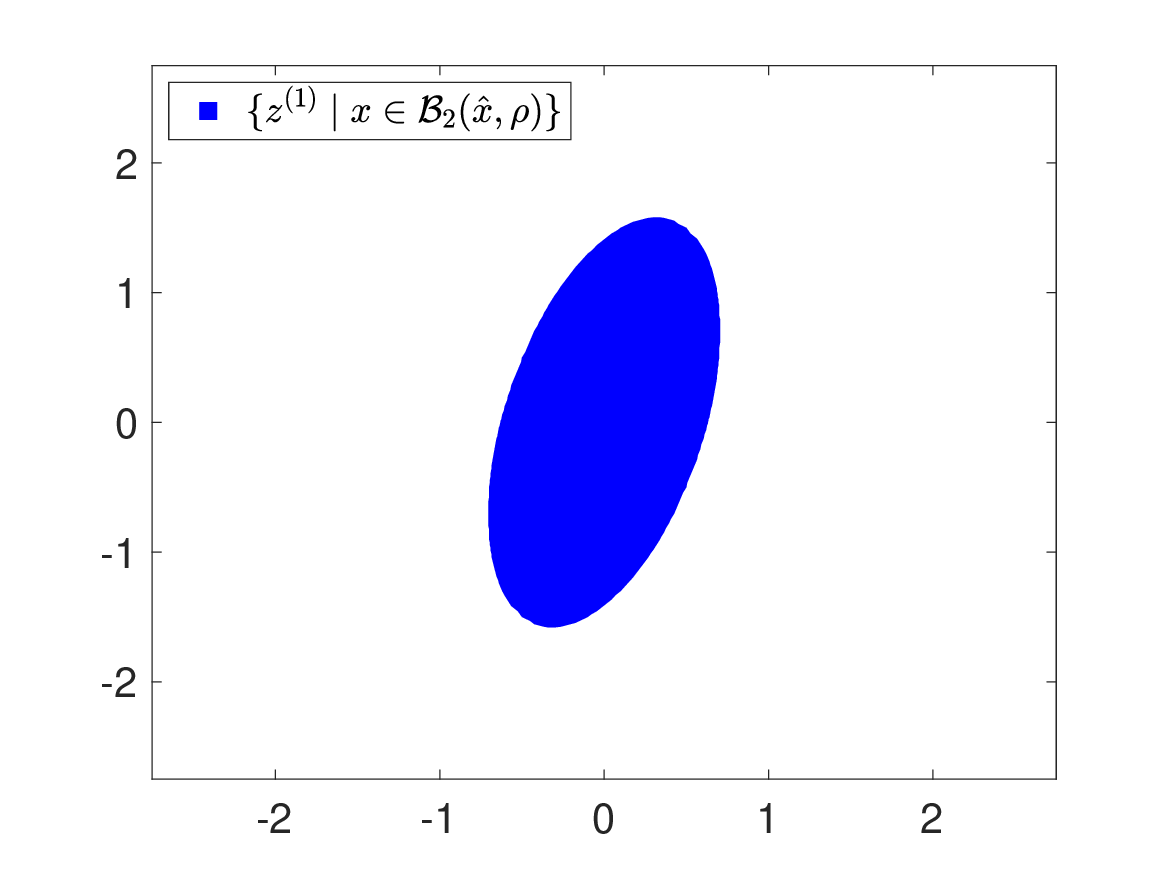}%
	\includegraphics[width=0.33\linewidth]{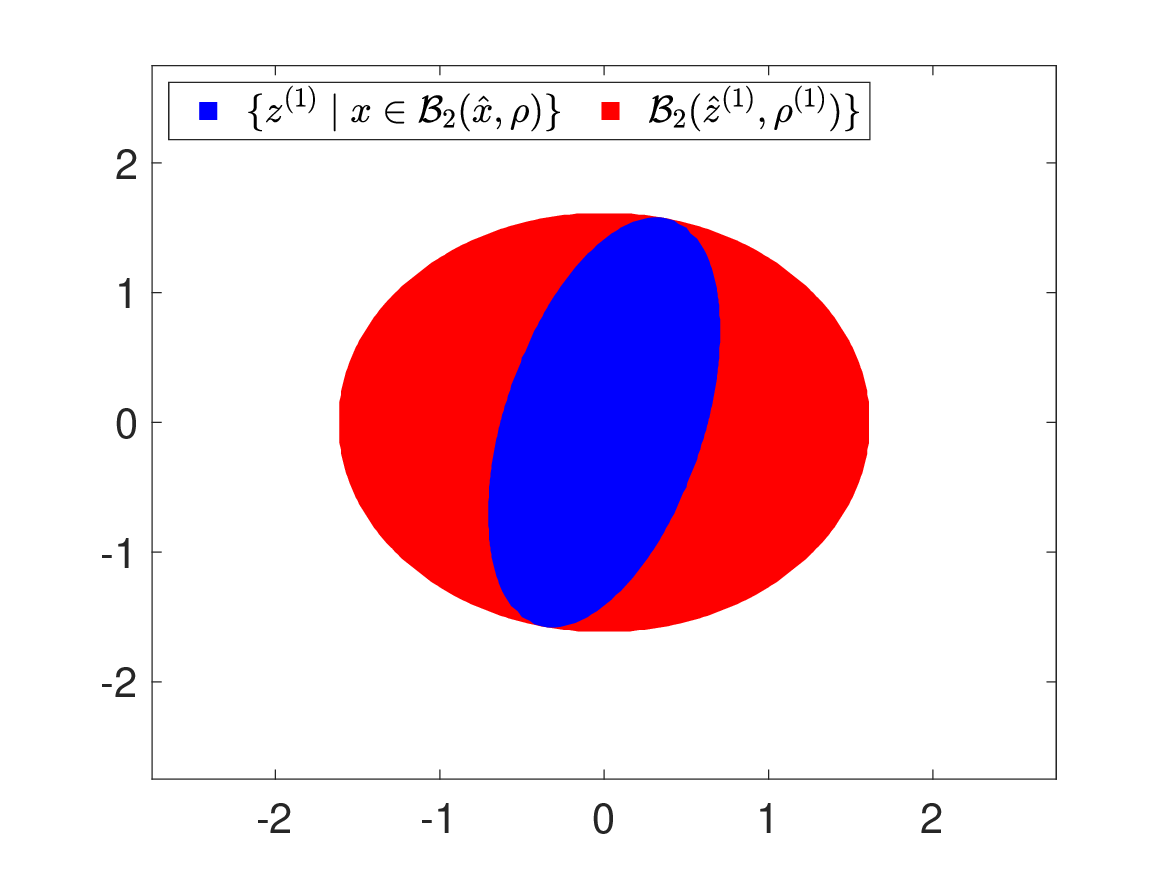}%
	\includegraphics[width=0.33\linewidth]{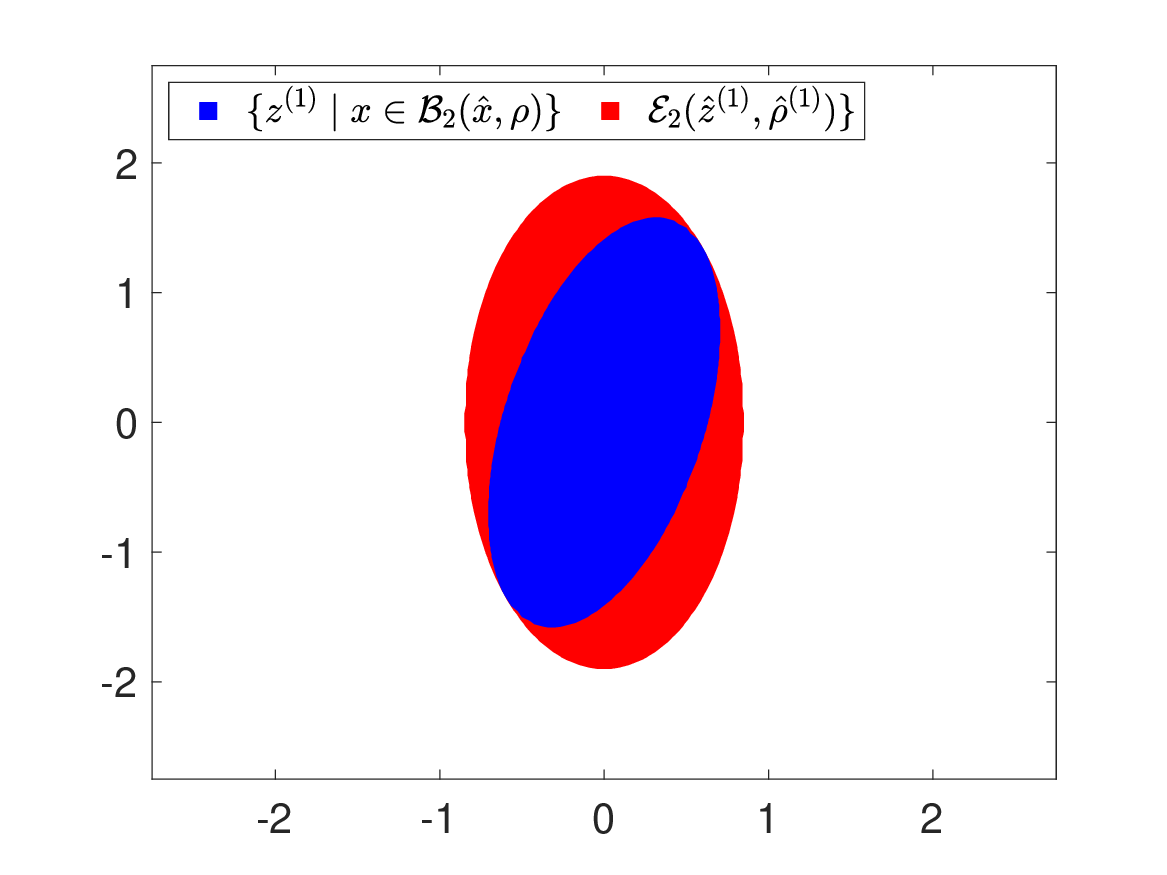}
	\caption{Constructing the $\ell_2$-norm ball and the ellipsoid relaxation at $z^{(1)}$ for a one layer neural network with $W^{(1)}=[0.5,0.5;1.5,-0.5]$, $\hat x = [0;0]$ and $\rho=1$. (\textbf{Left.}) The input set at $z^{(1)}$ with respect to the $\ell_2$-norm ball input set $\BB_2(\hat x,\rho)$ at $x$. The input set at $z^{(1)}$ is a rotated and elongated ellipsoid. (\textbf{Middle.}) The $\ell_2$-norm ball relaxation $\BB_2(\hat z^{(1)},\rho^{(1)})$ at $z^{(1)}$, where $\hat z^{(1)}=[0;0]$ and $\rho^{(1)}=\|W^{(1)}\|_2\rho=1.5302$. $\ell_2$-norm ball does not have enough degree of freedom to capture the shape of the input set at $z^{(1)}$. (\textbf{Right.}) The ellipsoid relaxation $\EE_2(\hat z_1,\hat\rho^{(1)})$ at $z^{(1)}$, where $\hat z^{(1)}=[0;0]$ and $\hat\rho^{(1)}=[0.5464;1.9360]$. The ellipsoid can better capture the shape of the input set at $z^{(1)}$.}\label{fig:ellipsoid}
\end{figure}

\subsection{Intersection between ellipsoid and elementwise constraints}
Another extension of SDP-CROWN is to also take the elementwise bound $\BB_\infty(\tilde z^{(k)}, \tilde\rho^{(k)})$ at $z^{(k)}$ into account when computing the relaxation of (\ref{eq:hopt}). In particular, SDP-CROWN can be further tightened by considering the intersection between the ellipsoid $\EE_2(\hat z^{(k)}, \hat\rho^{(k)})$ and the elementwise bound $\BB_\infty(\tilde z^{(k)}, \tilde\rho^{(k)})$ at $z^{(k)}$. For illustration, in Figure~\ref{fig:intersection}, we plot the  intersection between the elementwise bound $\BB_\infty(\tilde z^{(k)}, \tilde\rho^{(k)})$ and the ellipsoid $\EE_2(\hat z^{(k)}, \hat\rho^{(k)})$ using the same one-layer example in Figure~\ref{fig:ellipsoid}. In this case, the intersection removes four corners of $\BB_\infty(\tilde z^{(k)}, \tilde\rho^{(k)})$. We note that as the dimension increases, the number of corners removed grows exponentially.

\begin{figure}[h]
	\centering
	\includegraphics[width=0.33\linewidth]{figures/exp.eps}%
	\includegraphics[width=0.33\linewidth]{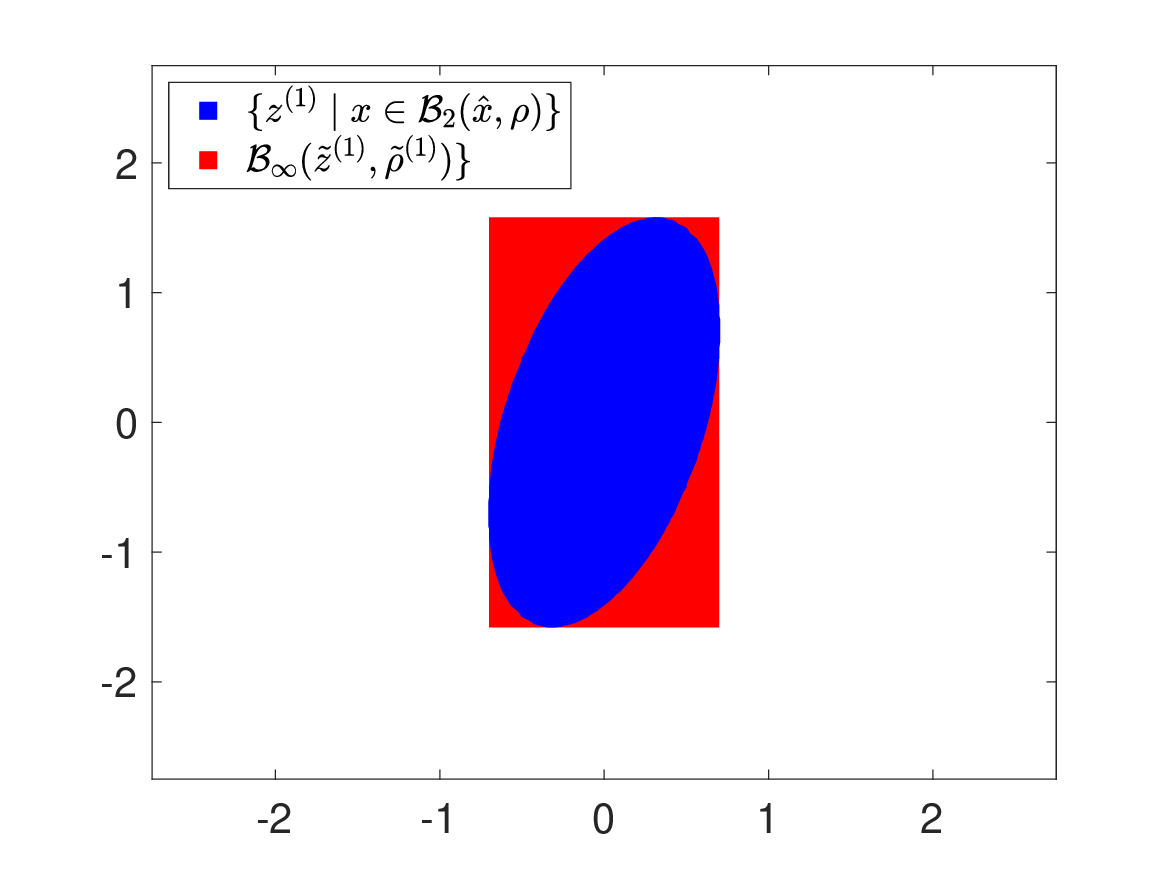}%
	\includegraphics[width=0.33\linewidth]{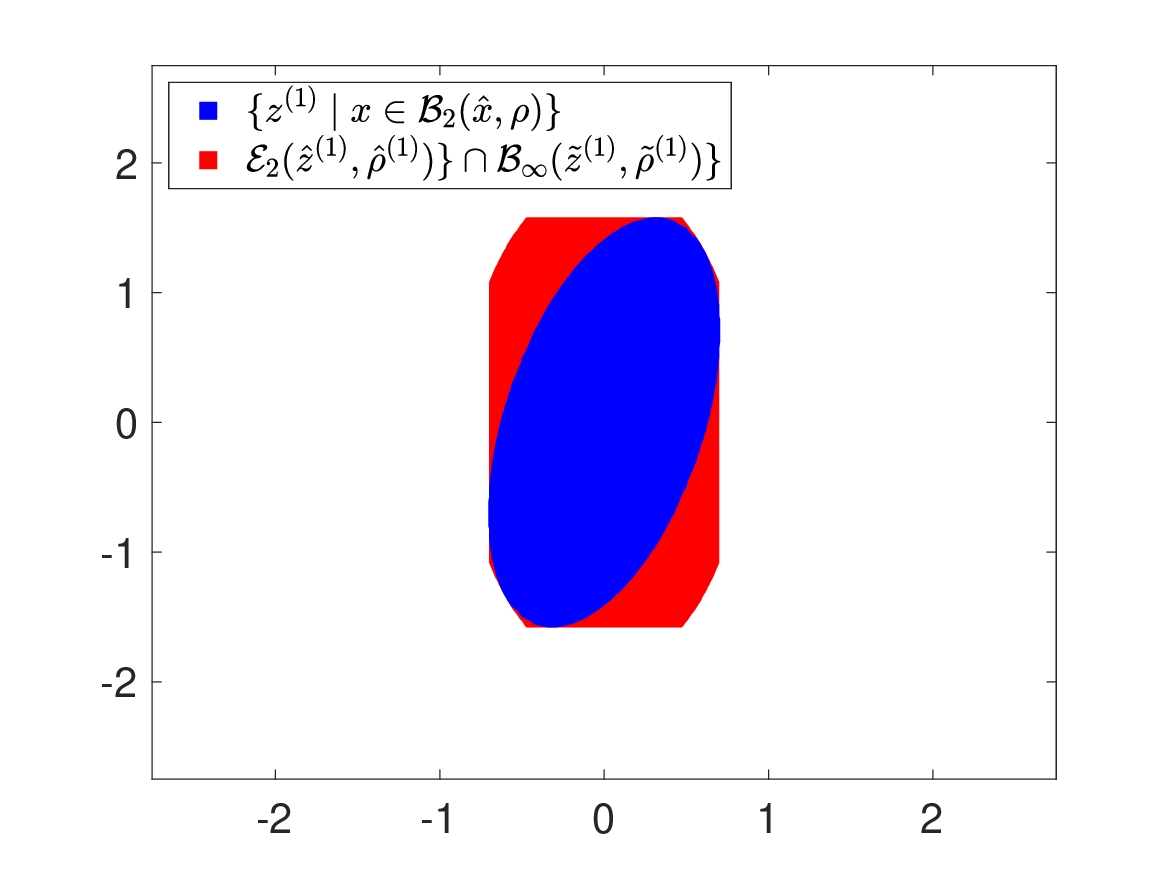}
	\caption{Constructing the intersection between ellipsoid and elementwise bound as a relaxation at $z^{(1)}$ for a one layer neural network with $W^{(1)}=[0.5,0.5;1.5,-0.5]$, $\hat x = [0;0]$ and $\rho=1$. (\textbf{Left.}) The input set at $z^{(1)}$ with respect to the $\ell_2$-norm ball input set $\BB_2(\hat x,\rho)$ at $x$. The input set at $z^{(1)}$ is a rotated and elongated ellipsoid. (\textbf{Middle.}) The elementwise bound relaxation $\BB_\infty(\tilde z^{(1)},\tilde \rho^{(1)})$ at $z^{(1)}$, where $\tilde z^{(1)}=[0;0]$ and $\tilde \rho^{(1)}=\|W^{(1)}\|_{r,2}\rho=[0.7071; 1.5811]$. (\textbf{Right.}) The intersection between ellipsoid $\EE_2(\hat z_1,\hat\rho^{(1)})$ and elementwise bound $\BB_\infty(\tilde z^{(1)},\tilde \rho^{(1)})$ at $z^{(1)}$, where $\hat z^{(1)}=[0;0]$ and $\hat\rho^{(1)}=[0.5464;1.9360]$. The intersection removes the corners of $\BB_\infty(\tilde z^{(1)},\tilde \rho^{(1)})$.}\label{fig:intersection}
\end{figure}

We can simply accommodate $\BB_\infty(\tilde z^{(k)}, \tilde\rho^{(k)})$ by adding the following inequality constraint into (\ref{eq:hopt})
\[
\relu(x_i)\leq \beta_i^{(k)}x + \gamma_i^{(k)}
\]
where $\beta_i^{(k)}$ and $\gamma_i^{(k)}$ are defined in (\ref{eq:relu_bound}). We summarized the extension of SDP-CROWN for handling the intersection between the elementwise bound $\BB_\infty(\tilde z^{(k)}, \tilde\rho^{(k)})$ and the ellipsoid $\EE_2(\hat z^{(k)}, \hat\rho^{(k)})$ in the following Theorem.

\begin{theorem}\label{thm:sdp_crown_lb_extension} 
Given $c,\hat x,\hat\rho,\tilde x,\tilde\rho\in\R^n$ where $\hat\rho,\tilde\rho\geq 0$. The following holds
\[
c^T\relu(x)\geq g^Tx+h(g,\lambda,\tau)\ \ \forall\ x\in\EE_{2}(\hat x,\hat\rho)\cap\BB_{\infty}(\tilde{x},\tilde\rho)
\]
for any $\lambda,\tau\geq 0$ and $g\in\R^n$ where 
\[
h(g,\lambda,\tau)=-\frac{1}{2}\left(\lambda(1-\|\diag(\hat\rho)^{-1}\hat x\|_2^2)+2\tau^T(\tilde\rho\odot\tilde\rho-\tilde x\odot\tilde x)+\frac{1}{\lambda}\|\phi(g,\lambda,\tau)\|_2^2\right)
\]
and 
\[
\phi_i(g,\lambda,\tau) = \hat\rho_i\cdot\min\{c_i-g_i+\tau_i(\tilde\rho_i-\tilde x_i)-\lambda\hat\rho_i^{-2}\hat x_i,g_i+\tau_i(\tilde\rho_i+\tilde x_i)+\lambda\hat\rho_i^{-2}\hat x_i,0\}.
\]
\end{theorem}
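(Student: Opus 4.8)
The plan is to mirror the proof of Theorem~\ref{thm:sdp_crown_lb}, augmenting the SDP relaxation with the ellipsoid's diagonal scaling and the extra box constraint. As in (\ref{eq:hopt}), the tightest valid offset equals the optimal value of the program $\min_x c^T\relu(x)-g^Tx$ subject to $x\in\EE_2(\hat x,\hat\rho)\cap\BB_\infty(\tilde x,\tilde\rho)$. I would apply the positive/negative splitting $x=u-v$ with $u,v\geq 0$, $u\odot v=0$ and introduce $U_i=u_i^2$, $V_i=v_i^2$. The ellipsoid then becomes the single quadratic constraint $\sum_i \hat\rho_i^{-2}(U_i+V_i)-2\hat\rho_i^{-2}\hat x_i(u_i-v_i)\leq 1-\|\diag(\hat\rho)^{-1}\hat x\|_2^2$, while the box is folded in through the ReLU triangle upper bound $\relu(x_i)=u_i\leq \beta_i x_i+\gamma_i$ of (\ref{eq:relu_bound}), i.e. the linear constraint $(1-\beta_i)u_i+\beta_i v_i\leq\gamma_i$ with $1-\beta_i=(\tilde\rho_i-\tilde x_i)/(2\tilde\rho_i)$, $\beta_i=(\tilde\rho_i+\tilde x_i)/(2\tilde\rho_i)$ and $\gamma_i=(\tilde\rho_i^2-\tilde x_i^2)/(2\tilde\rho_i)$.

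I would then form the SDP relaxation as in (\ref{eq:sdp_relu})--(\ref{eq:sdp_l2}) and apply chordal decomposition (Theorem 9.2 of \cite{vandenberghe2015chordal}) to break the coupled semidefinite constraint into $2\times 2$ blocks, exactly as in Theorem~\ref{thm:sdp_crown_lb}. Taking the Lagrangian dual, I introduce $\lambda\geq 0$ for the ellipsoid, a single multiplier $\tau_i\geq 0$ for each box constraint, $s,t\geq 0$ for $u,v\geq 0$, and $\mu$ for the trace-one equalities. Reducing each $2\times 2$ dual block to its determinant-plus-diagonal condition yields a SOCP of the same shape as (\ref{eq:socp}): the diagonal entries carry the factor $\lambda\hat\rho_i^{-2}$ inherited from the ellipsoid, whereas the off-diagonals of the $u$- and $v$-blocks pick up the additive shifts $\tau_i(\tilde\rho_i-\tilde x_i)$ and $\tau_i(\tilde\rho_i+\tilde x_i)$ contributed by the box (after absorbing the common $1/(2\tilde\rho_i)$ into a rescaled $\tau_i$), and the objective gains the constant $-\tfrac12\cdot 2\tau^T(\tilde\rho\odot\tilde\rho-\tilde x\odot\tilde x)$.

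The final step solves this SOCP in closed form. Fixing $\lambda,\tau$ and eliminating $\mu$ gives $\mu_i=\hat\rho_i^2\max\{(\cdot)^2,(\cdot)^2\}/\lambda$, and the factor $\hat\rho_i^2$ is precisely what surfaces as the $\hat\rho_i$ prefactor of $\phi_i$. Eliminating the slacks via $\min_{s_i\geq 0}(a_i-s_i)^2=\min\{a_i,0\}^2$ and then collapsing $\max\{\min\{a_i,0\}^2,\min\{b_i,0\}^2\}=\min\{a_i,b_i,0\}^2$ --- the identical manipulations used in Theorem~\ref{thm:sdp_crown_lb} --- assembles the three-way minimum $\phi_i(g,\lambda,\tau)$ and the stated offset $h(g,\lambda,\tau)$. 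Since each feasible $\lambda,\tau\geq 0$ yields a valid lower bound, the claimed inequality follows for all $g\in\R^n$ and $\lambda,\tau\geq 0$.

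The main obstacle is the dual bookkeeping: tracking precisely how the ellipsoid's $\hat\rho_i^{-2}$ scaling and the box multiplier $\tau_i$ enter the $2\times 2$ blocks, and confirming that the triangle-bound coefficients $\beta_i,\gamma_i$ recombine (after rescaling $\tau_i$) into exactly the shifts $\tau_i(\tilde\rho_i\mp\tilde x_i)$ and the constant $2\tau^T(\tilde\rho\odot\tilde\rho-\tilde x\odot\tilde x)$. Two sanity checks keep this honest: setting $\tau=0$ must recover Theorem~\ref{thm:sdp_crown_lb}, and --- in the box-free case --- the diagonal change of variables $y=\diag(\hat\rho)^{-1}x$, which turns $\EE_2(\hat x,\hat\rho)$ into a unit ball and sends $(c,g)\mapsto(c\odot\hat\rho,\,g\odot\hat\rho)$, independently reproduces both the $\hat\rho_i$ prefactor and the $\lambda\hat\rho_i^{-2}\hat x_i$ terms of $\phi_i$.
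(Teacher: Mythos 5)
Your proposal follows essentially the same route as the paper's own proof: the same generic optimization problem with the triangle upper bound encoding the box, the same positive/negative splitting and chordal $2\times 2$ SDP relaxation, the same dual variables $(\lambda,\tau,s,t,\mu)$, and the same closed-form elimination of $\mu$ and the slacks to assemble $\phi_i$ and $h(g,\lambda,\tau)$. The bookkeeping details you flag (the $\hat\rho_i^{-2}$ scaling on the diagonal, the rescaled $\tau_i$ absorbing the $1/(2\tilde\rho_i)$ factor, and the $\tau=0$ consistency check) all resolve exactly as you anticipate.
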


\subsection{Proof of Theorem~\ref{thm:sdp_crown_lb_extension}}
Given a linear relaxation $c^T\relu(x)\geq g^Tx+h$ that holds within $x\in\EE_{2}(\hat x,\hat\rho)\cap\BB_{\infty}(\tilde{x},\tilde\rho)$, the process of finding the tightest possible $h$ within $\EE_{2}(\hat x,\hat\rho)\cap\BB_{\infty}(\tilde{x},\tilde\rho)$ admits the following generic problem
\begin{equation*}
\begin{aligned}
\min_{x\in\R^n}\ & c^T\relu(x)-g^Tx\quad\text{s.t.}\quad \|\diag(\hat\rho)^{-1}(x-\hat x)\|_2\leq 1,\quad \relu(x_i)\leq \frac{\tilde\rho_i+\tilde x_i}{2\tilde\rho_i}x+\frac{\tilde\rho_i^2-\tilde x_i^2}{2\tilde\rho_i}\quad\text{for $i=1,\ldots,n$}.
\end{aligned}
\end{equation*}
Without loss of generality, we assume $\tilde x_i-\tilde\rho_i\leq 0\leq \tilde x_i+\tilde\rho_i$ for all $i$. Applying the positive/negative splitting $x=u-v$ where $u,v\geq 0$ and $u\odot v=0$ yields the following
\begin{equation}\label{eq:hopt_extension}
\begin{aligned}
\min_{u,v\in\R^n}\ & c^Tu-g^T(u-v)\\
\text{s.t. }\ &\sum_{i=1}^{n} (\hat\rho_i^{-1}u_i)^2 - 2\hat\rho_i^{-2}(u_i-v_i)\hat x_i + (\hat\rho_i^{-1}v_i)^2\leq 1 - \|\diag(\hat\rho)^{-1}\hat x\|_2^2,\\
& (\tilde\rho_i-\tilde x_i)u_i + (\tilde\rho_i+\tilde x_i)v_i \leq \tilde\rho_i^2-\tilde x_i^2\quad\text{for $i=1,\ldots,n$},\\
& u\geq 0,\quad v\geq 0,\quad u\odot v=0.
\end{aligned}
\end{equation}
The SDP relaxation of (\ref{eq:hopt_extension}) reads:
\begin{equation*}
	\begin{aligned}
		\min_{\tilde u,\tilde v,u,v,U,V\in\R^n}\ & c^Tu-g^T(u-v)\\
		\text{s.t. }\quad\ &\sum_{i=1}^{n} \hat\rho_i^{-2}U_i - 2\hat\rho_i^{-2}(u_i-v_i)\hat x_i + \hat\rho_i^{-2}V_i\leq 1 - \|\diag(\hat\rho)^{-1}\hat x\|_2^2,\\
& (\tilde\rho_i-\tilde x_i)u_i + (\tilde\rho_i+\tilde x_i)v_i \leq \tilde\rho_i^2-\tilde x_i^2\quad\text{for $i=1,\ldots,n$},\\
		&u\geq 0,\quad v\geq 0,\quad \tilde u + \tilde v = 1,\\
		&\begin{bmatrix}\tilde u_i&u_i\\u_i&U_i\end{bmatrix}\succeq 0,\quad 
\begin{bmatrix}\tilde v_i&v_i\\v_i&V_i\end{bmatrix}\succeq 0\quad\text{for $i=1,\ldots,n$}.
	\end{aligned}
\end{equation*}
Let $\lambda\in\R$ denote the dual variable of the first inequality constraints, $\tau_i\in\R$ denote the dual variable of each $(\tilde\rho_i-\tilde x_i)u_i + (\tilde\rho_i+\tilde x_i)v_i \leq \tilde\rho_i^2-\tilde x_i^2$, and $s,t,\mu\in\R^n$ denote the dual variable for $u\geq 0$, $v\geq 0$ and $\tilde u + \tilde v = 1$, respectively. The Lagrangian dual is given by
\begin{equation*}
	\begin{aligned}
		\max_{\lambda,\tau,s,t,\mu}\ & -\frac{1}{2}\cdot\left(\lambda(1-\|\diag(\hat\rho)^{-1}\hat x\|_2^2) + 2\tau^T(\tilde\rho\odot\tilde\rho-\tilde x\odot\tilde x) + \mu^T\1\right)\\
		\text{s.t. }\  \ & \begin{bmatrix}\mu_i&c_i-g_i+\tau_i(\tilde\rho_i-\tilde x_i)-\lambda\hat\rho_i^{-2}\hat x_i-s_i\\c_i-g_i+\tau_i(\tilde\rho_i-\tilde x_i)-\lambda\hat\rho_i^{-2}\hat x_i-s_i&\hat\rho_i^{-2}\lambda\end{bmatrix}\succeq 0\quad\text{for $i=1,\ldots,n$},\\
		& \begin{bmatrix}\mu_i&g_i+\tau_i(\tilde\rho_i+\tilde x_i)+\lambda\hat\rho_i^{-2}\hat x_i-t_i\\g_i+\tau_i(\tilde\rho_i+\tilde x_i)+\lambda\hat\rho_i^{-2}\hat x_i-t_i&\hat\rho_i^{-2}\lambda\end{bmatrix}\succeq 0\quad\text{for $i=1,\ldots,n$},\\
        &\lambda\geq 0,\quad\tau\geq 0,\quad s\geq 0,\quad t\geq 0,\quad \mu\geq 0.
	\end{aligned}
\end{equation*}
For a $2\times 2$ matrix, note that $X\succeq 0$ holds if and only if $\det(X)\ge 0$ and $\diag(X)\ge 0$. Applying this insight yields a second-order cone programming (SOCP) problem
\begin{equation}\label{eq:socp_extension}
	\begin{aligned}
		 \max_{\lambda,\tau,s,t,\mu}\ & -\frac{1}{2}\cdot\left(\lambda(1-\|\diag(\hat\rho)^{-1}\hat x\|_2^2) + 2\tau^T(\tilde\rho\odot\tilde\rho-\tilde x\odot\tilde x) + \mu^T\1\right)\\
		\text{s.t. }\  \ &\mu_i\lambda\geq \hat\rho_i^2(c_i-g_i+\tau_i(\tilde\rho_i-\tilde x_i)-\lambda\hat\rho_i^{-2}\hat x_i-s_i)^2,\\
		&\mu_i\lambda\geq \hat\rho_i^2(g_i+\tau_i(\tilde\rho_i+\tilde x_i)+\lambda\hat\rho_i^{-2}\hat x_i-t_i)^2,\\
        &\lambda\geq 0,\quad\tau\geq 0,\quad s\geq 0,\quad t\geq 0,\quad \mu\geq 0.
	\end{aligned}
\end{equation}
We are now ready to prove Theorem~\ref{thm:sdp_crown_lb_extension}. 
\begin{proof}
    Given any $c,g\in\R^n$. Let $a_i = \hat\rho_i(c_i-g_i+\tau_i(\tilde\rho_i-\tilde x_i)-\lambda\hat\rho_i^{-2}\hat x_i)$ and $b_i = \hat\rho_i(g_i+\tau_i(\tilde\rho_i+\tilde x_i)+\lambda\hat\rho_i^{-2}\hat x_i)$. Fixing any $\lambda,\tau\geq 0$ and optimizing $\mu$ in (\ref{eq:socp_extension}) yields
\begin{align*}
	&\max_{\lambda,s,t\geq 0}\ -\frac{1}{2}\cdot\left(\lambda(1-\|\diag(\hat\rho)^{-1}\hat x\|_2^2) + 2\tau^T(\tilde\rho\odot\tilde\rho-\tilde x\odot\tilde x)+\sum_{i=1}^{n}\frac{\max\left\{(a_i-s_i)^2,(b_i-t_i)^2\right\}}{\lambda}\right)\\
	=&\max_{\lambda\geq 0}\ -\frac{1}{2}\cdot\left(\lambda(1-\|\diag(\hat\rho)^{-1}\hat x\|_2^2) + 2\tau^T(\tilde\rho\odot\tilde\rho-\tilde x\odot\tilde x)+\sum_{i=1}^{n}\frac{\min\left\{a_i,b_i,0\right\}^2}{\lambda}\right)\\
	=&\max_{\lambda\geq 0}\ h(g,\lambda,\tau)
\end{align*}
where the first equality follows from $\min_{s_i\geq 0} (a_i-s_i)^2=\min\{a_i,0\}^2$ and $\min_{t_i\geq 0} (b_i-t_i)^2=\min\{b_i,0\}^2$, and $\max\{\min\{a_i,0\}^2,\min\{b_i,0\}^2\}=\min\{a_i,b_i,0\}^2$ for any $a_i,b_i\in\R$. Since $h(g,\lambda,\tau)$ is a lower bound on (\ref{eq:hopt_extension}) for any $\lambda,\tau\geq 0$, we have $c^T\relu(x)\geq g^Tx+h(g,\lambda,\tau)\ \forall x\in\EE_{2}(\hat x,\hat\rho)\cap\BB_{\infty}(\tilde{x},\tilde\rho)$ for any $g\in\R^n$, $\lambda,\tau\geq 0$.
\end{proof}

\section{Derivation of the dual problem (\ref{eq:socp})}\label{app:dual}
Recall that we have the primal problem
\begin{equation*}
	\begin{aligned}
		\min_{\tilde u,\tilde v,u,v,U,V\in\R^n}\ & c^Tu-g^T(u-v)\\
		\text{s.t. }\quad\ & (U+V)^T\1-2(u-v)^T\hat x \leq \rho^2-\|\hat x\|_2^2,\\
		&u\geq 0,\quad v\geq 0,\quad \tilde u + \tilde v = 1,\\  &\begin{bmatrix}\tilde u_i&u_i\\u_i&U_i\end{bmatrix}\succeq 0,\quad 
\begin{bmatrix}\tilde v_i&v_i\\v_i&V_i\end{bmatrix}\succeq 0\quad\text{for $i=1,\dots,n$}.
	\end{aligned}
\end{equation*}
Let $\lambda\geq 0$ denote the dual variables of the first inequality constraints. $s,t\geq 0, \mu\in\R^n$ denote the dual variable for $u\geq 0$, $v\geq 0$ and $\tilde u + \tilde v = 1$, respectively. $\begin{bmatrix}\tilde y_i& y_i\\y_i&Y_i\end{bmatrix}\succeq 0,\begin{bmatrix}\tilde z_i& z_i\\z_i&Z_i\end{bmatrix}\succeq 0$ denote the dual variables of the last two PSD constraints for $i=1,\ldots n$. The Lagrangian is given by
\begin{align*}
	&\mathcal L(\tilde u,\tilde v,u,v,U,V,\lambda,s,t,\mu,\tilde y,\tilde z,y,z,Y,Z)=\sum_{i=1}^{n}c_iu_i-g_i(u_i-v_i)\\
	&+\left[\sum_{i=1}^{n}\lambda(U_i+V_i)-2\lambda\hat x_i(u_i-v_i) \right] - \lambda(\rho^2-\|\hat x\|_2^2)\\
	&-\sum_{i=1}^{n}(s_iu_i+t_iv_i) + \sum_{i=1}^{n}\mu_i(\tilde u_i+\tilde v_i - 1)\\
	&-\sum_{i=1}^{n}\inner{\begin{bmatrix}\tilde y_i& y_i\\y_i&Y_i\end{bmatrix}}{\begin{bmatrix}\tilde u_i&u_i\\u_i&U_i\end{bmatrix}}-\sum_{i=1}^{n}\inner{\begin{bmatrix}\tilde z_i& z_i\\z_i&Z_i\end{bmatrix}}{\begin{bmatrix}\tilde v_i&v_i\\v_i&V_i\end{bmatrix}}.
\end{align*}
Rearranging the terms, we have
\begin{align}
	&\mathcal L(\tilde u,\tilde v,u,v,U,V,\lambda,s,t,\mu,\tilde y,\tilde z,y,z,Y,Z)=- \lambda(\rho^2-\|\hat x\|_2^2)-\sum_{i=1}^{n}\mu_i\nonumber \\
	&+\sum_{i=1}^{n}\inner{\begin{bmatrix}
	\mu_i-\tilde y_i& \frac{1}{2}(c_i-g_i-2\lambda\hat x_i-s_i)-y_i\\\frac{1}{2}(c_i-g_i-2\lambda\hat x_i-s_i)-y_i&\lambda-Y_i
	\end{bmatrix}}{\begin{bmatrix}\tilde u_i&u_i\\u_i&U_i\end{bmatrix}}\label{eq:dual_1} \\
	&+\sum_{i=1}^{n}\inner{\begin{bmatrix}
	\mu_i-\tilde z_i& \frac{1}{2}(c_i+2\lambda\hat x_i-t_i)-z_i\\\frac{1}{2}(c_i+2\lambda\hat x_i-t_i)-z_i&\lambda-Z_i
	\end{bmatrix}}{\begin{bmatrix}\tilde v_i&v_i\\v_i&V_i\end{bmatrix}}\label{eq:dual_2}.
\end{align}
Minimizing the Lagrangian over the primal variables yields
\begin{align*}
	&\min_{\tilde u,\tilde v,u,v,U,V\in\R^n} \mathcal L(\tilde u,\tilde v,u,v,U,V,\lambda,s,t,\mu,\tilde y,\tilde z,y,z,Y,Z)\\
	=&\begin{cases}
		- \lambda(\rho^2-\|\hat x\|_2^2)-\sum_{i=1}^{n}\mu_i & \text{if $(\ref{eq:dual_1})=0$ and $(\ref{eq:dual_2})=0\ \forall \tilde u,\tilde v,u,v,U,V\in\R^n$}\\
		-\infty &\text{otherwise}
	\end{cases}
\end{align*} 
where
\begin{align*}
	&(\ref{eq:dual_1})=0\quad\iff\quad \begin{bmatrix}\tilde y_i& y_i\\y_i&Y_i\end{bmatrix}=\begin{bmatrix}\mu_i& \frac{1}{2}(c_i-g_i-2\lambda\hat x_i-s_i)\\\frac{1}{2}(c_i-g_i-2\lambda\hat x_i-s_i)&\lambda\end{bmatrix}\quad\forall i\in\{1,\ldots,n\}\\
	&(\ref{eq:dual_2})=0\quad\iff\quad \begin{bmatrix}\tilde z_i& z_i\\z_i&Z_i\end{bmatrix}=\begin{bmatrix}\mu_i& \frac{1}{2}(c_i+2\lambda\hat x_i-t_i)\\\frac{1}{2}(c_i+2\lambda\hat x_i-t_i)&\lambda\end{bmatrix}\quad\forall i\in\{1,\ldots,n\}.
\end{align*}
Hence, the Lagrangian dual is given by
\begin{equation*}
	\begin{aligned}
		\max_{\lambda,s,t,\mu}\ & -\lambda(\rho^2-\|\hat x\|_2^2)-\mu^T\1\\
		\text{s.t. }\  \ & \begin{bmatrix}\mu_i& \frac{1}{2}(c_i-g_i-2\lambda\hat x_i-s_i)\\\frac{1}{2}(c_i-g_i-2\lambda\hat x_i-s_i)&\lambda\end{bmatrix}\succeq 0\quad\text{for $i=1,\dots,n$},\\
		& \begin{bmatrix}\mu_i& \frac{1}{2}(c_i+2\lambda\hat x_i-t_i)\\\frac{1}{2}(c_i+2\lambda\hat x_i-t_i)&\lambda\end{bmatrix}\succeq 0\quad\text{for $i=1,\dots,n$},\\
        &\lambda\geq 0,\quad s\geq 0,\quad t\geq 0,\quad \mu\geq 0.
	\end{aligned}
\end{equation*}
Rescaling $\lambda\equiv \frac{1}{2}\lambda$ and $\mu\equiv\frac{1}{2}\mu$ to yield
\begin{equation*}
	\begin{aligned}
		\max_{\lambda,s,t,\mu}\ & -\frac{1}{2}\lambda(\rho^2-\|\hat x\|_2^2)-\frac{1}{2}\mu^T\1\\
		\text{s.t. }\  \ & \begin{bmatrix}\mu_i&c_i-g_i-\lambda\hat x_i-s_i\\c_i-g_i-\lambda\hat x_i-s_i&\lambda\end{bmatrix}\succeq 0\quad\text{for $i=1,\dots,n$},\\
		& \begin{bmatrix}\mu_i&g_i+\lambda\hat x_i-t_i\\g_i+\lambda\hat x_i-t_i&\lambda\end{bmatrix}\succeq 0\quad\text{for $i=1,\dots,n$},\\
        &\lambda\geq 0,\quad s\geq 0,\quad t\geq 0,\quad \mu\geq 0.
	\end{aligned}
\end{equation*}
For a $2\times 2$ matrix, note that $X\succeq 0$ holds if and only if $\det(X)\ge 0$ and $\diag(X)\ge 0$. Finally, applying this insight yields the desired dual problem:
\begin{equation*}
	\begin{aligned}
		\frac{1}{2}\cdot \max_{\lambda,s,t,\mu}\ & -\lambda(\rho^2-\|\hat x\|_2^2)-\mu^T\1\\
		\text{s.t. }\  \ &\lambda \mu_i\geq (c_i-g_i-s_i-\lambda\hat x_i)^2\quad\text{for $i=1,\dots,n$},\\
		&\lambda \mu_i\geq (-g_i+t_i-\lambda\hat x_i)^2\quad\text{for $i=1,\dots,n$},\\
        &\lambda\geq 0,\quad s\geq 0,\quad t\geq 0,\quad \mu\geq 0.
	\end{aligned}
\end{equation*}

\end{document}